\newcites{main}{References}
\newcites{appendix}{Supplementary References}
\newtheorem{theorem}{\textbf{Theorem}}[section]
\newcommand{\dt}[2]{\left\langle #1,#2 \right\rangle}
\DeclareMathOperator{\Ne}{Ne}
\DeclareMathOperator{\vvec}{vec}
\DeclareMathOperator{\tr}{tr}
\title{Bethe Learning of Conditional Random Fields via MAP Decoding}
\author{} 
\author{
  {\bf Kui Tang} \\
  Columbia University \\
  \And
  {\bf Nicholas Ruozzi} \\
  UT Dallas \\
  \And
  {\bf David Belanger}\\
  UMass Amherst \\
  \And
  {\bf Tony Jebara} \\
  Columbia University
}
\begin{document}

\maketitle

\begin{abstract}
Many machine learning tasks can be formulated in terms of predicting structured outputs. In frameworks such as the structured support vector machine (SVM-Struct) and the structured perceptron, discriminative functions are learned by iteratively applying efficient maximum a posteriori (MAP) decoding. However, maximum likelihood estimation (MLE) of probabilistic models over these same structured spaces requires computing partition functions, which is generally intractable. This paper presents a method for learning discrete exponential family models using the Bethe approximation to the MLE. Remarkably, this problem also reduces to iterative (MAP) decoding. This connection emerges by combining the Bethe approximation with a Frank-Wolfe (FW) algorithm on a convex dual objective which circumvents the intractable partition function. The result is a new single loop algorithm MLE-Struct, which is substantially more efficient than previous double-loop methods for approximate maximum likelihood estimation. Our algorithm outperforms existing methods in experiments involving image segmentation, matching problems from vision, and a new dataset of university roommate assignments.
\end{abstract}
			
\section{INTRODUCTION}
Learning the parameters of a Markov random field (MRF) or a conditional random field (CRF) is a ubiquitous problem in machine learning and related fields.  Often, the parameters are learned via regularized maximum likelihood estimation (MLE) and then prediction is performed via maximum a-posteriori (MAP) or marginal inference\footnote{In this paper, MAP inference refers to predicting an output $Y$ given parameters $\theta$, while MLE learning refers to estimating $\theta$ given observations $(Y^{1},X^{1}), \ldots,(Y^{m},X^{m})$, optionally including quadratic regularization.} As the log-likelihood is concave, it can in principle be maximized by gradient ascent.  However, this requires repeatedly computing gradients of the log-partition function, which in general is intractable. One can circumvent this difficulty by using surrogates for the log-partition function~\citepmain{sutton2005, ganapathi2008, domke2013} or by approximating the partition function using sampling~\citepmain{petterson2009exponential,papandreou2011perturb}. 

Alternatively, one can avoid likelihoods entirely, and use methods such as the structured perceptron or structured support vector machines (SVM-Struct) that rely only on a MAP solver~\citepmain{collins2002discriminative,roller2004max,tsochantaridis2004support, finley2008training}.  Such methods can often be quite accurate and are typically faster than approximate MLE, since MAP, or relaxations thereof, can be performed quickly using sophisticated combinatorial solvers. By using such solvers as black boxes, MAP-based training methods also offer users an attractive abstraction between the learning problem and the optimization algorithm.   On the other hand, MLE remains a primary goal for many practitioners, since it may yield superior predictive accuracy, offers parameter values with increased interpretability and statistical properties, and supports test-time marginal inference.

In this work, we introduce MLE-Struct, a novel approximate MLE algorithm that also only requires access to a MAP solver. We combine Bethe-style convex free energies with the Frank-Wolfe (FW) method~\citepmain{frank1956algorithm,jaggi2013revisiting}.   A naive application of FW for approximate MLE would perform approximate marginal inference using repeated calls to MAP, as in the experiments of~\citetmain{sontag2007new}, and then use these marginals to perform a single gradient step on the parameters. This double-loop algorithm requires a significant number of MAP solver calls, especially if very accurate answers are required.  Our approach achieves fast learning by avoiding this costly double loop structure.  First, we employ a generic reweighted entropy approximation technique that yields convex Bethe-style surrogate likelihoods for any underlying undirected graphical model. Then, we construct a constrained, convex dual problem for this approximate maximum likelihood objective.  We demonstrate that the approximate dual problem can be minimized efficiently using FW:  each of the linear subproblems that are solved as part of the algorithm can be formulated as separate approximate or exact MAP inference tasks on each training example.  Finally, we introduce a technique to accelerate the line search subroutine of FW by precomputing certain data-dependent terms. 

We can also use FW to perform test-time marginal inference using the procedure of~\citetmain{sontag2007new}.  Therefore, at both train and test time we can interact with our underlying problem structure using only a MAP routine.  This allows us to design fast approximate learning and prediction algorithms for a wide variety of settings in which efficient approximate/exact MAP solvers exist:  bipartite/general matching and b-matching problems (via the max-flow and blossom algorithms), pairwise binary graphical models (via QPBO), planar Ising models with no external field (via a reduction to matching)~\citepmain{schraudolph08efficient}, among others.

We apply our method to learn pairwise binary CRFs and distributions over matchings on both bipartite and general graphs. Our method provides good predictive performance while often solving the approximate MLE problem significantly faster with fewer numerical instabilities than other approximate MLE methods.  We also apply our method to a new dataset of housing preferences and roommate assignments of university students to predict good freshmen roommate assignments.

\section{BACKGROUND AND RELATED WORK}
\label{sec:background}

We consider conditional random fields where, in addition to samples $Y^{(1)},\ldots,Y^{(M)}$ from some discrete space $\mathcal{Y}$, we also observe feature vectors $X^{(1)},\ldots,X^{(M)}\in\mathcal{X}$ \citepmain{lafferty2001conditional}.  In this case, the conditional probability of the $m^{\text{th}}$ sample has the form
\begin{align}
p(Y^{(m)}|X^{(m)};\theta) = \frac{\exp(\langle\phi(X^{(m)},Y^{(m)}),\theta\rangle)}{Z^{(m)}(X;\theta)}\label{eq:crf}
\end{align}
where $\phi$ is a vector of sufficient statistics, $\theta$ is a vector of parameters, and the partition function is given by 
\[Z(X^{(m)};\theta) = \sum_{Y\in\mathcal{Y}} \exp(\langle\phi(X^{(m)},Y),\theta\rangle).\]
In typical applications, the joint probability distribution factors over a hypergraph $G = (V,\mathcal{A})$ where $\mathcal{A}$ is a collection of subsets of $V$.  For ease of presentation, we assume that
\[\phi(X,Y) = \{\phi_i(X,Y_i)|_{i\in V}, \phi_{\alpha}(X,Y_\alpha)|_{\alpha\in \mathcal{A}}\}.\]
That is, for $Y\in\mathcal{Y}$ and $X\in\mathcal{X}$,
\begin{align*}
p(Y|&X;\theta) =\\
&\frac{\exp\Big(\sum_{i\in V}\theta_i\phi_i(X,Y_i) + \sum_{\alpha\in\mathcal{A}} \theta_\alpha \phi_\alpha(X,Y_\alpha)\Big)}{Z(X;\theta)}.
\end{align*}
These models include both Potts and Ising models, as well as log-linear distributions over matchings~\eqref{eq:match}.

Given $M$ observations $\{Y^{(1)},\ldots,Y^{(M)}\}$ with corresponding feature vectors $\{X^{(1)},\ldots,X^{(M)}\}$ where $p(Y^{(m)}|X^{(m)};\theta)$ is of the form \eqref{eq:crf}, we would like to learn $\theta$ by maximizing the log-likelihood of the observations plus a quadratic regularizer.
\begin{align*}
\ell(\theta;X^{(1:M)},Y^{(1:M)}) &= \sum_{m=1}^M \ell(\theta;X^{(m)},Y^{(m)}) - \frac{\lambda}{2}\|\theta\|^2
\end{align*}
Here,
\begin{align*}
\ell(\theta;X^{(m)},Y^{(m)})  = \langle\phi(X^{(m)},Y^{(m)}),\theta\rangle - \log Z(X^{(m)}).
\end{align*}

\subsection{CONVEX FREE ENERGY APPROXIMATIONS}
\label{sec:energyapprox}

The central challenge in maximum likelihood estimation is computing the partition function $Z(X^{(m)}, \theta)$ for each sample $m$ at each iteration. In this work, we approximate the partition function in order to make the learning problem tractable. We begin with the Bethe free energy, a standard approximation to the so-called Gibbs free energy that is motivated by ideas from statistical physics.  The approximation has been generalized to include different \textit{counting numbers} that result in alternative entropy approximations \citepmain{weissdbcnt}.  We focus on a restricted set of counting numbers that result in a family of \emph{convex} reweighted free energies.

The reweighted free energy at temperature $T = 1$ is specified by a polytope approximation $\mathcal{T}$, the hypergraph $G = (V,\mathcal{A})$, an entropy approximation $H_\rho$, and a vector of counting numbers $\rho$ (henceforth referred to as reweighting parameters).
\begin{align}
\log F_\rho(\tau, X; \theta) \triangleq E(\tau, X;\theta) - H_\rho(\tau),\label{eq:rfe}
\end{align}
where the energy is given by
\begin{align*}
E(\tau, X;\theta) \triangleq & -\sum_{i\in V} \sum_{Y_i} \tau_i(Y_i)\theta_i\phi_i(X,Y_i)\\
&\:\: - \sum_{\alpha\in\mathcal{A}} \sum_{Y_\alpha} \tau_\alpha(Y_\alpha)\theta_\alpha\phi_\alpha(X,Y_\alpha), \
\end{align*}
the entropy approximation is given by
\begin{align*}
H_\rho(\tau) \triangleq &-\sum_{i\in V} \sum_{y_i} \tau_i(y_i)\log \tau_i(y_i)\\
&\:\: -\sum_{\alpha\in\mathcal{A}} \sum_{y_\alpha} \rho_\alpha \tau_\alpha(y_\alpha)\log \frac{\tau_\alpha(y_\alpha)}{\prod_{i\in\alpha} \tau_i(y_i)},
\end{align*}
and $\tau$ is restricted to lie in an outer bound of the marginal polytope known as the local (marginal) polytope,
\begin{align*}
\mathcal{T} \triangleq \{ &\tau\geq 0 : \mbox{for all } i\in V, \sum_{Y_i} \tau_{i}(Y_i) = 1\\
&\text{for all } \alpha\in\mathcal{A}, i\in \alpha, Y_i, \sum_{Y_{\alpha\setminus \{i\}}} \tau_\alpha(Y_\alpha) = \tau_i(Y_i)  \}.
\end{align*}
The reweighted partition function is then computed by minimizing \eqref{eq:rfe} over $\mathcal{T}$
\[Z_\rho(X;\theta) \triangleq \exp(-\min_{\tau\in\mathcal{T}} F_\rho(\tau, X; \theta)).\]
Setting $\rho_\alpha =1$ for each $\alpha\in\mathcal{A}$ recovers the typical Bethe free energy approximation. The reweighting parameters can always be chosen so that the approximate free energy is convex \citepmain{margwain, hazan2012convergent, ruozzi2013}.  For example, the tree-reweighted belief propagation algorithm (TRW) chooses the reweighting parameters so that they correspond to (hyper)edge appearance probabilities of a collection of spanning (hyper)trees.

\subsection{SADDLE POINT FORMULATION}
We approximate the exact partition function in the MLE objective with a reweighted free energy approximation of the form~\eqref{eq:rfe}.  This results in the saddle-point problem
\begin{align}
\max_{\theta} \min_{\tau^{(1:M)}\in\mathcal{T}}\Bigg[&\sum_{m=1}^M \Big[\langle\phi(X^{(m)},Y^{(m)}),\theta\rangle\hspace{2.8cm}\nonumber\\
&\: - \log F_\rho(\tau^{(m)}, X^{(m)};\theta)\Big] - \frac{\lambda}{2} \|{\theta}\|^2\Bigg].\hspace{-.5cm} \label{eq:saddle} 
\end{align}
\citetmain{heinglob2011} investigated unregularized likelihoods of this form for MRFs, and demonstrated that convexity of the Bethe free energy guarantees that the empirical marginals satisfy a moment matching condition:  the empirical marginals minimize the Bethe free energy at the $\theta$ that maximizes the approximate log-likelihood.  Moment matching is not necessarily achieved for general MRFs when the reweighted approximation is not convex \citepmain{heinglob2011}.  \citetmain{margwain} investigated the use of TRW for learning in pairwise binary graphical models.  They observed that the parameters learned via TRW were more robust to the addition of new data than those learned by BP.  This robustness of convex free energy approximations for learning can be made theoretically precise \citepmain{wainwright2006wrong}.

If we compute the partition function via an iterative procedure, then solving~\eqref{eq:saddle} necessarily requires a double-loop algorithm, which can be expensive for large datasets.   The existing work on {\it Bethe learning} has sought to design more efficient approximate learning algorithms.  \citetmain{sutton2005} proposed a piecewise training scheme whereby the graph is divided into smaller subgraphs over which the partition function can be efficiently computed exactly or approximately.  These results are then combined to approximate the true partition function.  Because the subproblems are typically much smaller, the procedure is quite fast but can be inaccurate if the pieces are too small \citepmain{ganapathi2008}. For bipartite matchings, one can obtain an unbiased but noisy gradient of the log-likelihood by utilizing an $O(|V|^4 \log V)$ perfect sampler algorithm due to \citetmain{huber08fast}. \citetmain{petterson2009exponential} use this approach for ranking and graph matching problems, but limited themselves to 20 vertices, and each observation required its own set of samples. \citetmain{domke2013} proposed performing MLE using a small, fixed number of TRW iterations as part of a procedure to estimate the gradient.  However, if TRW is not converging quickly (i.e., a reasonable solution is not obtained after running for a fixed number of iterations), the resulting procedure can fail to converge. \citetmain{vishwanathan2006accelerated} proposed improving the convergence in the outer loop using accelerated gradient methods. All of the above methods rely on a double loop.

\section{APPROXIMATE MLE}

We now consider a  convex \textit{dual} reformulation of~\eqref{eq:saddle} that applies to convex free energies and yields a new, fast learning algorithm.  We first note that~\eqref{eq:saddle} is concave in the variables being maximized and convex in the variables being minimized, and one set of variables (the $\tau$) are constrained to a compact domain. We can thus invoke Sion's minimax theorem \citepmain{sion1958} to reverse the $\max$ and $\min$ operators. Next, we can analytically solve for the optimal $\theta$ in terms of fixed $\tau^{(1)},\ldots,\tau^{(M)}\in\mathcal{T}$. Setting the gradient with respect to $\theta$ equal to zero in \eqref{eq:saddle} yields
\begin{align}
\theta^*_i(\tau^{(1:M)}) = \frac{1}{\lambda}\Bigg( & \sum_m \Big[\phi_i(X^{(m)}, Y^{(m)}_i)  -  \nonumber\\
  &  \sum_{Y_i} \tau^{(m)}_i(Y_i) \phi_i(X^{(m)}, Y_i)\Big]\Bigg)  \label{eq:tau2theta_a} \\
\theta^*_\alpha(\tau^{(1:M)}) = \frac{1}{\lambda}\Bigg( & \sum_m \Big[\phi_\alpha(X^{(m)}, Y^{(m)}_\alpha) - \nonumber\\
    &\sum_{Y_\alpha} \tau^{(m)}_\alpha(Y_\alpha) \phi_\alpha(X^{(m)}, Y_\alpha)\Big]\Bigg). \label{eq:tau2theta_b} 
\end{align}
Finally, substituting these back into \eqref{eq:saddle} yields the following optimization problem over the local marginal polytope.
\begin{align}
\min_{\tau^{(1:M)}\in\mathcal{T}} \frac{1}{2\lambda}\|\theta^*(\tau^{(1:M)})\|^2 - &\sum_m H_\rho(\tau^{(m)})\nonumber\\
&\triangleq \min_{\tau^{(1:M)}\in\mathcal{T}} L(\tau^{(1:m)})
\label{eq:single}
\end{align}
The linearly-constrained convex objective~\eqref{eq:single} can be minimized via general convex optimization techniques, such as the ellipsoid method---though this can be slow in practice. In the sequel, we minimize this objective with the Frank-Wolfe algorithm (FW). 

Convex free energies can be obtained using the reweighting techniques above, and whenever the graph is a tree the standard Bethe free energy is both convex and exact. In principle, a similar argument can be made for any convex approximation of the partition function, though we only focus on convex Bethe-style approximations in this work.

The MLE objective is dual to the maximum entropy problem, and recent work on approximate MLE has focused on different families of entropy approximations \citepmain{wainwright2008graphical}. \citetmain{ganapathi2008} also followed a maximum entropy approach to the approximate MLE problem.  They proposed approximating the entropy objective using the Bethe entropy approximation (i.e., $H_1$), but specifically avoided convex entropy approximations.  Unfortunately, this results in a non-convex optimization problem in general, for which the authors use the concave-convex procedure. Other recent work approximated the MLE  problem using convex free energies, but did not consider the maximum entropy approach \citepmain{domke2013}.

\subsection{FRANK-WOLFE ALGORITHM FOR MAXIMUM LIKELIHOOD LEARNING}
\label{sec:fw-match}
  
 Following \citetmain{jaggi2013revisiting}, FW minimizes 
a general convex function $f(x)$ over a convex set $\mathcal{X}$ via a sequence of iterates defined by
\begin{align}
{s}_{t} &= \arg \min_{x \in \mathcal{X}} \langle x,  \nabla
f(x_{t-1})\rangle \label{fw:subproblem} \\
x_{t} &= (1 - \gamma_t ) x_{t-1} + \gamma_t {s}_{t}, \label{fw-update}
\end{align}
where the step-size, $\gamma_t$, is either selected using line search or is fixed at
$\frac{2}{2+t}$.

For the objective function in \eqref{eq:single}, each step requires minimizing a linear objective over a linear set of constraints.
\begin{align}
s_t = {\arg\min}_{\tau^{(1)},\ldots, \tau^{(M)}\in\mathcal{T}} \langle\tau^{(1:m)}, \nabla L(\tau^{(1:m)}_{t-1})\rangle\label{fw:iterate}
\end{align}
Since the constraints are separable across training examples, \eqref{fw:iterate} decouples into $M$ independent linear programs (LPs) that can be solved in parallel.  Depending on the specific application, purely combinatorial methods or reweighted message-passing algorithms may provide faster and more space efficient alternatives to generic LP solvers.  Note that~\eqref{eq:single} could not be solved with projected gradient algorithms efficiently, since projection onto the local or marginal polytopes is not tractable. 

Despite the ability to perform~\eqref{fw:iterate} in parallel, it can still be prohibitive for large sample sizes. For convex optimization problems over separable constraint spaces, \citetmain{lacoste2013block} propose a block-coordinate FW algorithm (BCFW).  The BCFW procedure performs the FW iteration over a randomly selected $m\in\{1,\ldots,M\}$ and leaves the remaining coordinates untouched.  BCFW requires less work at each iteration, but the asymptotic rate of convergence remains the same as that for the standard FW algorithm \citepmain{lacoste2013block}.  This block coordinate approach is known to outperform FW for the SVM-Struct problem.  Technical details concerning the convergence of FW for this problem, including methods to bound the convergence rate, can be found in Appendix \ref{app:fw}. Both the FW and BCFW versions of our algorithm, MLE-Struct, are described in Algorithm~\ref{alg:fw-learning}.Line search can be accelerated by precomputing quadratic terms as discussed in~\ref{app:line-search}.

\begin{algorithm}[t]
  \caption{MLE-Struct:  Frank-Wolfe Approximate Maximum-Likelihood Learning}   
\begin{algorithmic}
\STATE {\bfseries Input:} training examples $\{(X^{(m)},Y^{(m)}\}$, reweighting parameters $\rho \in [0,1]^n$,  regularizer $\lambda$\\ 
   \STATE {\bfseries Output:} Approximate maximum likelihood $\theta$. 
 \STATE {\bfseries Initialization:} Set each $\tau^{(m)}$ uniformly. 
   \REPEAT 
   \FOR{$\forall m$ in parallel (batch) or $m$ chosen uniformly at random (block)}
   \STATE $s^{(m)}_t = {\arg\min}_{\tau^{(M)}\in\mathcal{T}} \langle\tau^{(m)}, \nabla^{(m)} L(\tau^{(1:m)}_{t-1})\rangle$
   \STATE  Set $\gamma = \frac{2}{2 + t}$ for batch and $\gamma = \frac{2M}{2M + t}$ for block or use line search. 
   \STATE $\tau_t = (1 - \gamma)\tau_{t-1}^{(m)} + \gamma s_t^{(m)}$
   \ENDFOR
   \UNTIL{converged}
\STATE  Set $\theta$ using~\eqref{eq:tau2theta_a} and~\eqref{eq:tau2theta_b}. 

\end{algorithmic}
\label{alg:fw-learning}

\end{algorithm}

\subsection{FRANK-WOLFE FOR MARGINAL INFERENCE}
\label{sec:fw-marg}

In many applications, computing marginals is useful at test time, as well as during learning. Fortunately, we can use FW to perform marginal inference, thus maintaining our ability to interact with the underlying model only through a MAP solver. Specifically, approximate reweighted Bethe marginals can be obtained by minimizing~\eqref{eq:rfe} with respect to $\tau$, which is a convex problem suitable for FW. The technique was first used in~\citetmain{sontag2007new}, using a generic LP solver for MAP. 

In Appendix~\ref{sec:fw-inf}, we provide experiments on CRFs defined over bipartite matchings (see Section~\ref{sec:perms}), demonstrating the favorable accuracy and speed of FW-based inference versus the BP algorithm of~\citetmain{huang2009approximating} and an instance of the Perturb-and-MAP framework designed specifically for matchings~\citepmain{KeLi2013}. We find that FW outperforms Perturb-and-MAP in terms of both accuracy and convergence speed. FW and BP minimize the same objective, since the Bethe entropy is convex for matchings, so we compare them purely in terms of speed. We find that FW is preferable to BP in most regimes, except when extremely precise optimization is required. 

\section{APPLICATIONS AND EXPERIMENTS}
\label{sec:expts}

We apply the MLE-Struct framework to a variety of exponential family models defined over different combinatorial structures, including grid CRFs for image segmentation, bipartite matchings in vision applications, and general perfect matchings for a university roommate assignment problem. For CRFs, MAP inference is intractable, but we can efficiently solve the LP relaxation, which is equivalent to MAP inference over the local polytope with QPBO~\citepmain{rother2007qpbo}. This means our estimated pseudomarginals will not be globally consistent, but the procedure can still yield accurate predictions~\citepmain{margwain}. For the matching problem, we use efficient max-flow solvers to obtain exact MAP solutions (i.e., over the marginal polytope)~\citepmain{goldberg1995efficient,kolmogorov2009blossom}. In this case, our estimated pseudomarginals will be globally consistent. Appendix~\ref{app:expt-details} details the data sources, feature extraction, and machine setup.

\subsection{PERMANENTS AND PERFECT MATCHINGS}
\label{sec:perms}
We first consider the problem of learning distributions over perfect matchings of a given graph.  For a graph $G = (V,E)$ and edge weights $w_{ij} \in \mathbb{R}$, the probability of observing a particular matching is
\begin{align}
f(Y;w) = \frac{1}{Z(W)} \exp\Big(\frac{1}{2}\tr(WY)\Big)\label{eq:match}
\end{align}
where $Y$ is the adjacency matrix of a perfect matching in $G$, $W$ is a weighted adjacency matrix of $G$, and $Z(W)$ is the partition function. Each entry of $W$ is a function of edge-wise features. Our formulation can be relaxed to distributions over all matchings by allowing $Y$ to correspond to the adjacency matrix of any (not necessarily perfect) matching.

When $G$ is bipartite, the partition function is the permanent of the matrix of edge weights and is thus \#P-hard to compute~\citepmain{valiant1979complexity}. Although the partition function can be computed to any given accuracy using a fully polynomial randomized approximation scheme \citepmain{jerrum2004polynomial}, such algorithms are impractical for graphs of any significant size.

In practice, $W$ is unknown and must be learned from data. We can learn a generative model by estimating $W$ directly, or a conditional model by first assuming that $W$ is the linear combination of some feature maps and then learning the weights. For concreteness, suppose we have $K$ features, and for the $k^\text{th}$ feature we have a $|V|\times|V|$ matrix $F_k$. Let $\theta \in \mathbb{R}^K$ be our model parameters, so that the weight on edge $(i,j)\in E$ is $W_{ij} = \sum_{k=1}^K \theta_K F^k_{ij}$. Then the conditional likelihood is 
\begin{align}
p(Y;F^{1:K},\theta) = \frac{\exp{\Big(\frac{1}{2}\sum_{k = 1}^K \theta_k \tr(F^k Y)\Big)}}{Z(F^{1:K},\theta)}\label{eq:gm}
\end{align}
\begin{theorem}
\label{thm:conv}
For any $\rho \in [0,1]^{|V|}$, any graph (bipartite or general), and any matching (perfect or imperfect), the reweighted free energy~\eqref{eq:rfe} is convex over the local polytope.
\end{theorem}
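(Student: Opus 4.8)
The plan is to reduce the statement to concavity of the entropy term and then to exploit the matching constraints directly. Since the energy $E(\tau,X;\theta)$ in \eqref{eq:rfe} is linear in $\tau$, the free energy $\log F_\rho = E - H_\rho$ is convex over $\mathcal{T}$ if and only if the entropy approximation $H_\rho$ is concave over $\mathcal{T}$. For matching models the variable nodes are the edge indicators and the factor nodes are the vertices (hence $\rho\in[0,1]^{|V|}$), each carrying the hard ``exactly/at most one incident edge'' constraint. On $\mathcal{T}$ every pseudomarginal is therefore determined by the edge marginals $\mu_e=\tau_e(1)$, and, after using the marginalization constraints to rewrite the pairwise terms, $H_\rho$ collapses to the separable expression $\sum_e\big[-\mu_e\log\mu_e-(1-\rho_u-\rho_v)(1-\mu_e)\log(1-\mu_e)\big]$ for perfect matchings (with an extra per-vertex ``unmatched'' term in the imperfect case). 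First I would record that this makes the Hessian diagonal and monotone in $\rho$: increasing any $\rho_v$ only lowers the diagonal entries, so it suffices to treat the hardest case $\rho\equiv 1$, the ordinary Bethe entropy.

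The key obstacle is that this is \emph{not} a nonnegative combination of concave singleton/factor entropies: the effective singleton counting number $1-\rho_u-\rho_v$ is negative whenever $\rho_u+\rho_v>1$, so concavity must come from the local polytope constraints rather than from termwise concavity. Concretely, I must show the diagonal Hessian quadratic form is negative semidefinite on the tangent space $\{d:\sum_{e\ni v}d_e=0\ \forall v\}$, i.e. that $\sum_e\frac{1-2\mu_e}{\mu_e(1-\mu_e)}\,d_e^2\ge 0$ for all such $d$. The plan is to split each edge's contribution equally between its two endpoints, thereby reducing the global inequality to the per-vertex statement $\sum_{e\ni v}\frac{d_e^2}{\mu_e}\ge\sum_{e\ni v}\frac{d_e^2}{1-\mu_e}$, to be proved under the local constraints $\sum_{e\ni v}\mu_e=1$ and $\sum_{e\ni v}d_e=0$.

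To establish the per-vertex inequality I would use $\sum_{e\ni v}\mu_e=1$ to note that at most one incident edge can have $\mu_e>1/2$; if none does, every coefficient $\frac{1-2\mu_e}{\mu_e(1-\mu_e)}$ is nonnegative and the inequality is immediate. In the remaining case, with the single heavy edge supplying the one negative coefficient, I would eliminate its component via $d_{\text{heavy}}=-\sum_{\text{light}}d_e$ and apply Cauchy--Schwarz, reducing everything to the scalar bound $|c_{\text{heavy}}|\sum_{\text{light}}1/c_e\le 1$. This in turn follows from convexity of $g(x)=\frac{x(1-x)}{1-2x}$ on $[0,1/2)$ (one checks $g''(x)=2/(1-2x)^3>0$): maximizing $\sum g(\mu_e)$ over the light edges subject to their fixed total mass places all mass on one edge, which makes the bound tight, with equality exactly in the degree-two case. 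Summing the per-vertex inequalities over $V$ gives the global claim, and since bipartiteness is never invoked, general graphs are covered automatically.

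The main difficulty I anticipate is the imperfect-matching case, where the ``unmatched'' outcome contributes an extra $-(1-s_v)\log(1-s_v)$ term per vertex, with $s_v=\sum_{e\ni v}\mu_e$. This breaks both separability and the clean monotonicity in $\rho$, adding a rank-one coupling $-\frac{\rho_v}{1-s_v}\mathbf{1}\mathbf{1}^\top$ to each vertex block and removing the equality constraint (the feasible set becomes full-dimensional). I would handle it by the same per-vertex strategy, treating the unmatched state as an auxiliary coordinate $\mu_{e_0}=1-s_v$ with $d_{e_0}=-\sum_{e\ni v}d_e$, so that each vertex block again becomes a constrained scalar quadratic form; the bookkeeping around the asymmetric slack coefficient is where I expect the argument to need the most care.
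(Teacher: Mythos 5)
Your proposal is correct and its skeleton coincides with the paper's: both reduce convexity of~\eqref{eq:rfe} to concavity of the entropy term (the energy is linear in $\tau$), and both use the identical vertex-wise decomposition in which each edge's contribution is split with weight $\tfrac12$ between its two endpoints, turning the claim into a per-vertex concavity statement for the probability vector $(\mu_e)_{e\ni v}$ augmented by the slack $1-\sum_{e\ni v}\mu_e$. The difference lies in what happens next. The paper stops there and imports the key ingredient---concavity of $S_n(x_1,\ldots,x_n)=\sum_i\bigl[(1-x_i)\log(1-x_i)-x_i\log x_i\bigr]$ on the simplex---as a known lemma of Vontobel, deferring general $\rho$ to ``Theorem 60'' of that reference. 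You instead prove this lemma from scratch, and your argument is sound: since $\sum_{e\ni v}\mu_e=1$, at most one coefficient $\frac{1-2\mu_e}{\mu_e(1-\mu_e)}$ is negative; Cauchy--Schwarz applied to $d_{\text{heavy}}=-\sum_{\text{light}}d_e$ reduces the claim to $|c_{\text{heavy}}|\sum_{\text{light}}c_e^{-1}\le 1$; and since $g(x)=x(1-x)/(1-2x)$ is convex on $[0,1/2)$ with $g(0)=0$, it is superadditive, so $\sum_{\text{light}}g(\mu_e)\le g(1-\mu_{\text{heavy}})$, which is exactly the required bound. So you have supplied, in elementary and self-contained form, precisely the piece the paper cites. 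Two caveats. First, your monotonicity-in-$\rho$ reduction to $\rho\equiv 1$ is clean only for perfect matchings, where the vertex slack terms $-\rho_v(1-s_v)\log(1-s_v)$ vanish identically on the local polytope; for imperfect matchings the $\rho_v$-weighted slack term moves in the opposite direction under increases of $\rho_v$, so termwise monotonicity fails---you correctly flag this. Second, the cleanest way to close the imperfect, general-$\rho$ case is not further Hessian bookkeeping but the identity $H_\rho=\sum_v\rho_v\bigl[\tfrac12 S\bigl((\mu_e)_{e\ni v},\,1-s_v\bigr)+\tfrac12 h(1-s_v)\bigr]+\sum_{e=(u,v)}\bigl(1-\tfrac{\rho_u+\rho_v}{2}\bigr)h(\mu_e)$, with $s_v=\sum_{e\ni v}\mu_e$ and $h$ the binary entropy, which exhibits $H_\rho$ as a nonnegative combination of concave functions whenever $\rho\in[0,1]^{|V|}$; your per-vertex lemma is exactly the concavity of the $S$ term, so your argument slots directly into this decomposition and finishes the general case without the delicate slack bookkeeping you anticipate.
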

Theorem~\ref{thm:conv} is proven in Appendix~\ref{app:conv}. By inclusion, it implies~\eqref{eq:rfe} is also convex over the marginal polytope. This generalizes earlier known results of convexity for bipartite perfect matchings \citepmain{betheperm, chertkov2013}. Due to the convexity of the Bethe entropy and the availability of high-quality maximum-weight matching solvers, Algorithm~\ref{alg:fw-learning} is well-suited to the approximate MLE task.  A derivation of the specific form of~\eqref{eq:single} for matchings and a technique for making the  associated line search particularly efficient by precomputing certain data-dependent terms can be found in Appendix~\ref{app:fwmatch}.

\subsubsection{SYNTHETIC BIPARTITE MATCHINGS}

\begin{figure}[t!]
  \centering
  \begin{subfigure}{\columnwidth}
    \includegraphics[width=\columnwidth]{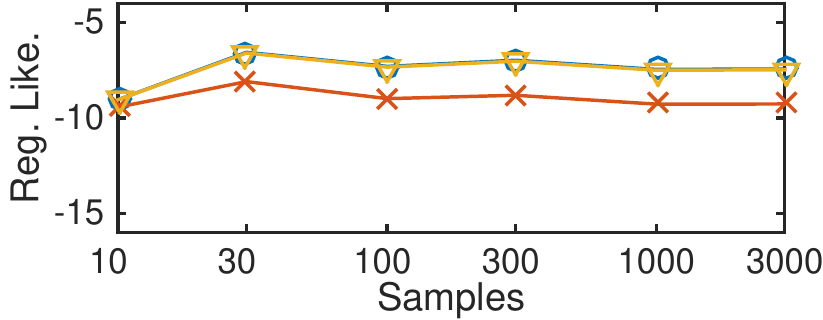}
    \caption{\emph{High SNR problem}: Average (divided by sample size) \emph{true} regularized log-likelihood evaluated at the exact MLE as well as at the parameters that maximize the RW and Bethe approximate log-likelihoods. Higher is better, with the \emph{Exact} curve being the upper bound. The data are nearly as probable under the Bethe estimator as they are under the exact MLE.}
  \end{subfigure}

  \begin{subfigure}{\columnwidth}
    \includegraphics[width=\columnwidth]{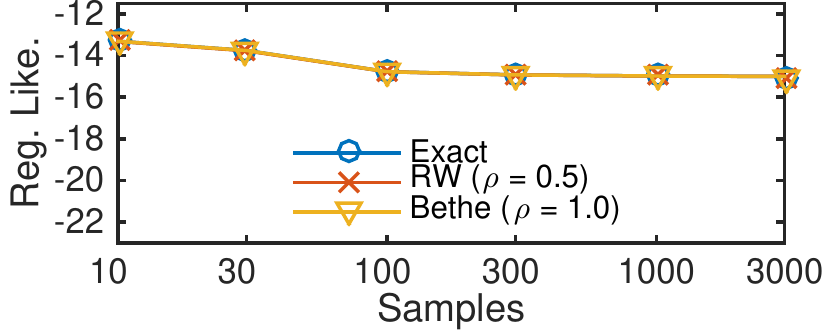}
    \caption{\emph{Low SNR problem:} Same plot as (a). All methods perform comparably in this setting.} 
  \end{subfigure}

  \begin{subfigure}{\columnwidth}
    \includegraphics[width=\columnwidth]{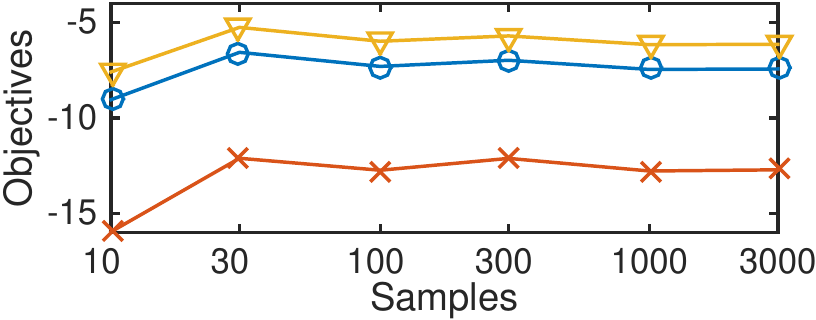}
    \caption{\emph{High SNR problem:} Optimal values of regularized true, and approximate log-likelihoods under the RW and Bethe approximations. The true likelihood is always bounded by the approximations.}
  \end{subfigure}

  \begin{subfigure}{\columnwidth}
    \includegraphics[width=\columnwidth]{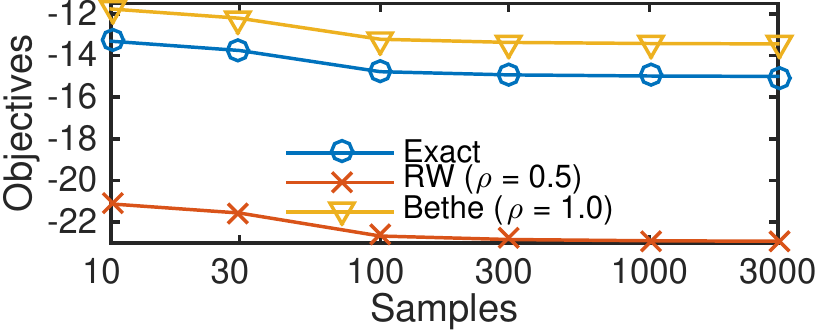}
    \caption{\emph{Low SNR problem:} Same as (c)}
  \end{subfigure}

  \caption{Exact likelihoods of the Bethe and RW estimates, and sandwich bounds on the likelihood.}
  \label{fig:synmat}
\end{figure}

We begin with a synthetic experiment using the flexibility of MLE-Struct to analyze the accuracy of various entropy approximations for matchings. We sample $10 \times 10$ bipartite matchings from the distribution~\eqref{eq:gm}. We explore two choices of the weight matrix $W$: one in the \emph{high SNR regime} with $-2$ on the off-diagonals and $0$ on the diagonals, and one in the \emph{low SNR regime} $-0.5$ off-diagonal and $0$ on-diagonal. 

Our problems are small enough that we can compute exact partition functions and their gradients with Ryser's algorithm~\citepmain{ryser1963}.  Hence, we can perform exact MLE with gradient descent. We can also evaluate the true (regularized) likelihood of our estimates. We ran Algorithm~\ref{alg:fw-learning} with $\rho = 1$ and called this result the \emph{Bethe estimator.} In addition, setting $\rho = 0.5$ for this problem guarantees a concave entropy approximation and an upper bound on the partition function \citepmain{weissconv}.  We also ran Algorithm~\ref{alg:fw-learning} at this setting and denote the result as the \emph{RW estimator}.

Figure~\ref{fig:synmat}(a) displays the average regularized log-likelihood of each estimator, higher being better and the \emph{Exact} curve being an upper bound. In both low and high SNR regimes, the Bethe estimator is superior to the RW estimator. Reweighted entropies such as the one chosen here are known to perform poorly as estimators of the true partition function as compared to belief propagation.  Interestingly, although the objective values of the estimates are different in each case, in the low SNR regime, all estimation methods produce about the same likelihood.

Our framework can also be used to bound the value of the true likelihood. First, since $Z_\text{RW}$ provides an upper bound on $Z$~\citepmain{margwain}, we have $Z_{\text{RW}}(W) = Z_{0.5}(W) \geq Z(W)$ for any $W$. Second, for matchings, we have $Z_B(W) = Z_{1}(W) \leq Z(W)$ \citepmain{gurvitsnew}. Therefore, we have

\begin{equation}
  \log \ell_{\text{RW}}(W) \leq \log \ell(W) \leq \log \ell_B(W)
  \label{eq:sandwich-all}
\end{equation}
for all $W$. Moreover, $\ell_{\text{RW}}$ and $\ell_B$ are \emph{global} bounds on the maximum likelihood, so the inequalities also hold at their respective optima. That is,
\begin{equation}
  \log \ell_{\text{RW}}(W^*_{\text{RW}}) \leq \log \ell(W^*) \leq \log \ell_B(W^*_{B})
  \label{eq:sandwich-opt}
\end{equation}
where $W^*_{\text{RW}}$ is the RW estimator, $W^*$ is the regularized MLE, and $W^*_{B}$ is the Bethe estimator. We plot the quantities of~\eqref{eq:sandwich-opt} in Figure~\ref{fig:synmat}(b). We can also use~\eqref{eq:sandwich-all} to obtain upper and lower bounds of $\ell(W^*_B)$ and $\ell(W^*_\text{RW})$ by using FW for inference to compute $\ell_B(W^*_{\text{RW}})$ and $\ell_{\text{RW}}(W^*_{B})$, since $\ell_B(W^*_B)$ and $\ell_{\text{RW}}(W^*_{\text{RW}})$ will already be available upon convergence of Algorithm~\ref{alg:fw-learning}. Appendix~\ref{app:expt-details} shows these results.

\subsubsection{GRAPH MATCHINGS}
\begin{figure}[t!]
\centering
\scalebox{.77}{
\begin{tabular}{|c|c|c|c|c|c|c|}
\hline
$\boldsymbol{\rho}$&$.5$ &$.6$ &$.7$ &$.8$ &$.9$ &$1$\\
\hline
\textbf{Test Loss}& 0.013 &0.013& 0.017& 0.017&  0.020&  0.017\\
\hline
\end{tabular}}
\caption{Test loss for approximate MLE using FW on the house data set with a gap of $50$ and different choices of uniform $\rho$ vectors.  }
\label{fig:rho}
\end{figure}

\begin{figure}[t!]
\centering
\includegraphics[width=.8\columnwidth]{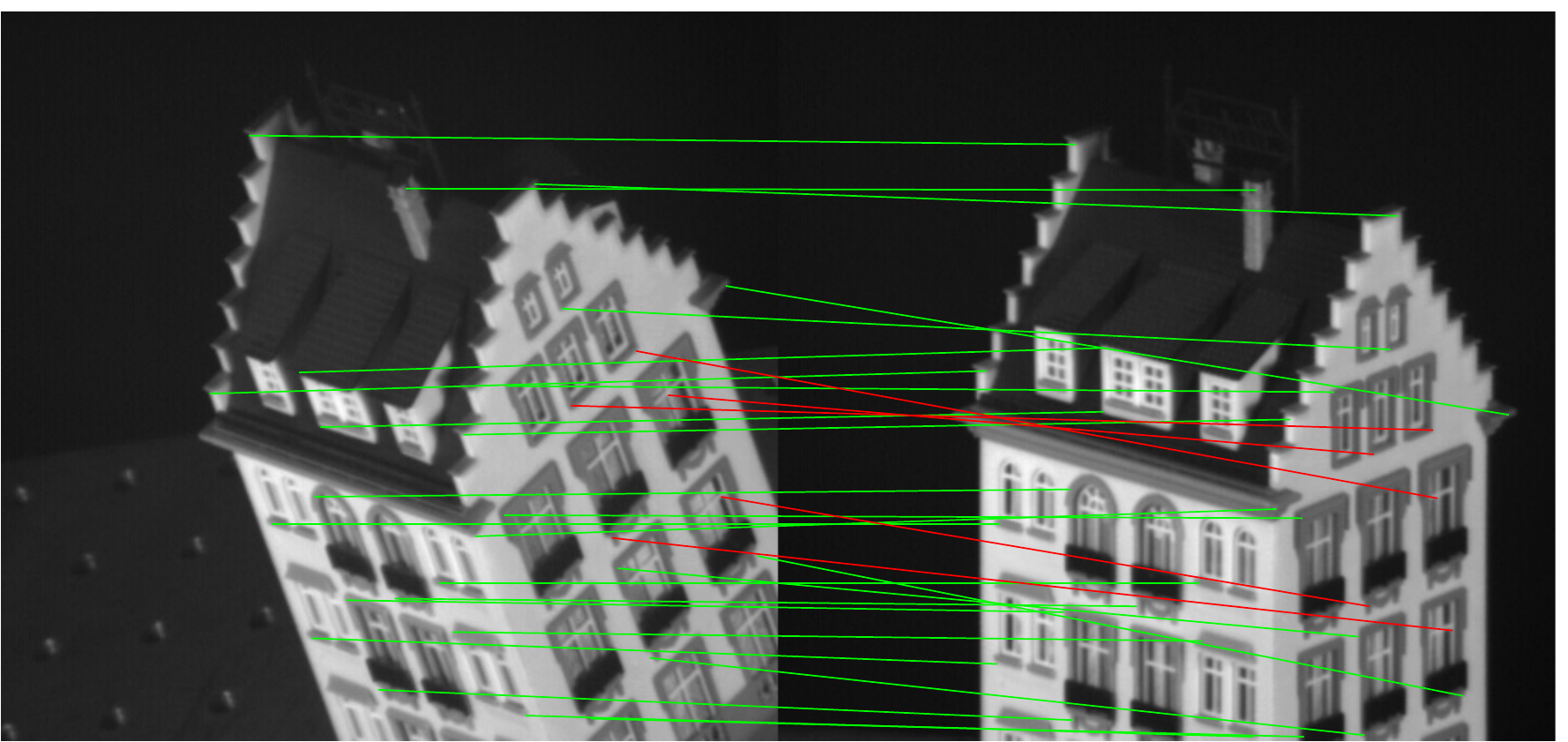}
\includegraphics[width=.8\columnwidth]{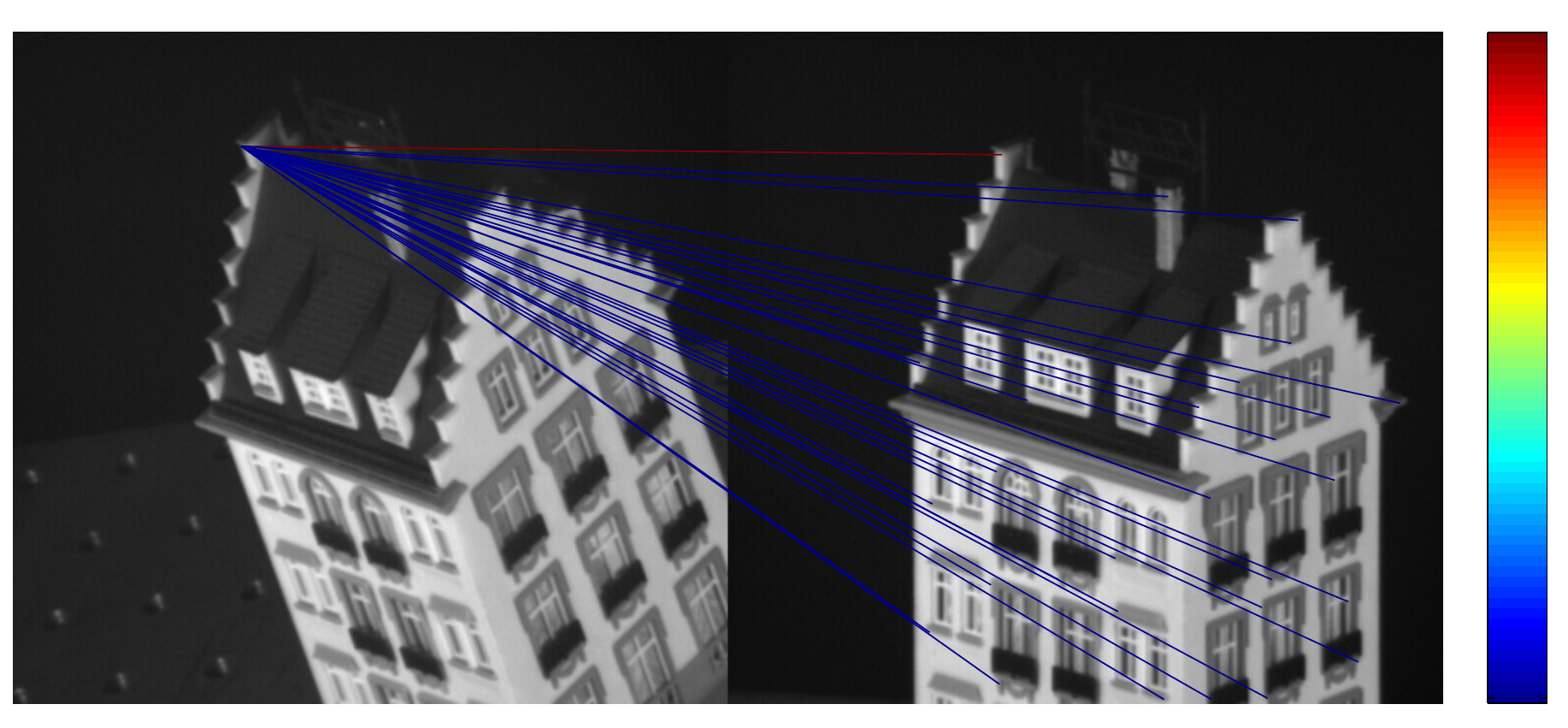}
\includegraphics[width=.8\columnwidth]{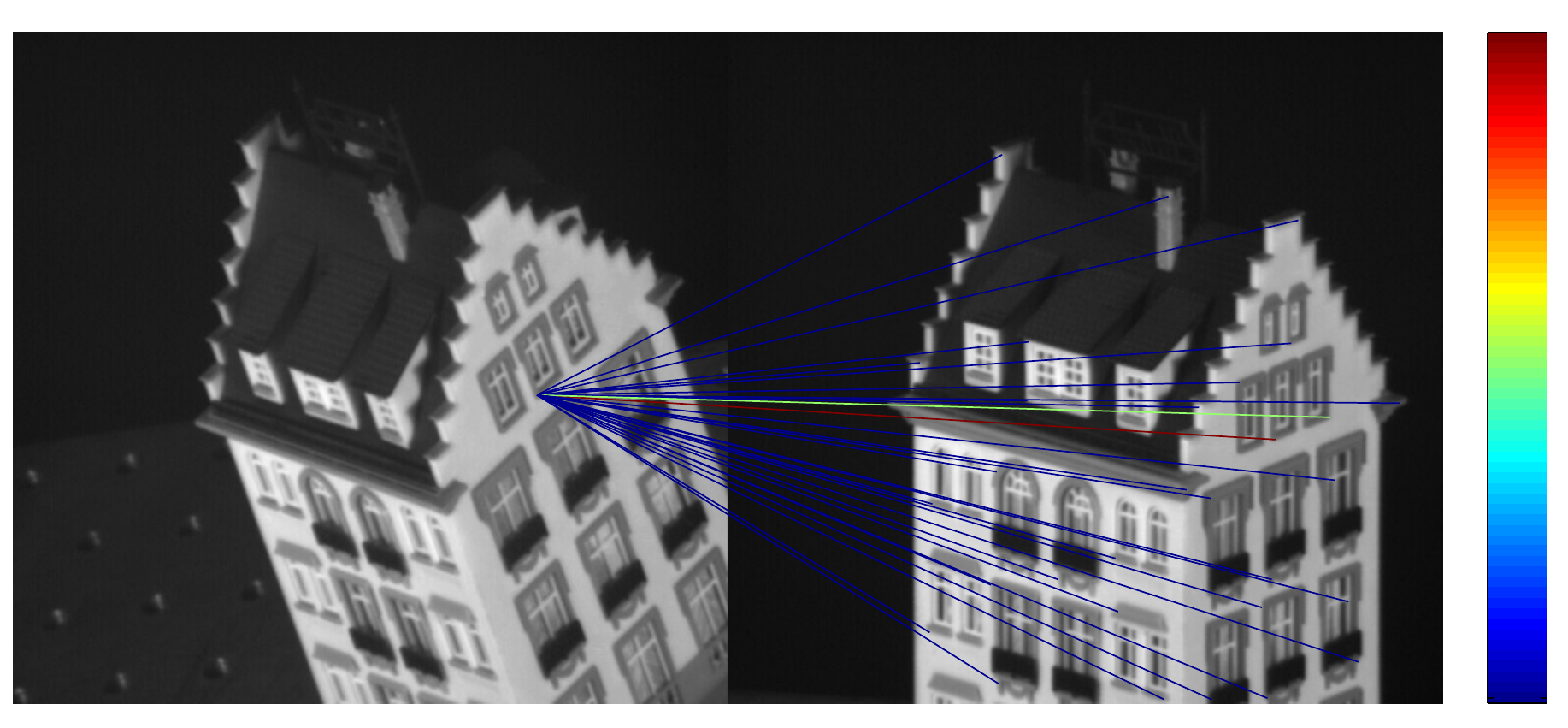}
\caption{
  \emph{Top:} Approximate MAP assignment produced from the learned model. Red edges were incorrectly matched.
  \emph{Middle:} Pseudomarginals for a correctly predicted edge. The correct edge has high probability (red) while all others have low probability (blue).
  \emph{Bottom:} Pseudomarginals for a wrongly predicted edge. There are two edges with nontrivial probability (red and green). When the model is forced to pick one, it picked the wrong one.
}
\label{fig:mismatch}
\end{figure}

\begin{figure}[t!]
\centering
\scalebox{.85}{
\begin{tabular}{|l|l|l|}
\multicolumn{3}{ c }{\textbf{Hotel}}\\
\hline
& \multicolumn{1}{c|}{\textbf{FW}}& \multicolumn{1}{c|}{\textbf{lin.+l.}}\\\hline
{ 0}&{0.0}&{0.0}\\\hline
{10}&{0.0}&0.0022\\\hline
{20}&{0.0049}&{0.0049}\\\hline
{30}&0.020&0.020\\\hline
{40}&0.023&{0.013}\\\hline
{50}&0.0614&{0.050}\\\hline
{60}&0.13&{0.12}\\\hline
{70}&0.17&{0.15}\\\hline
{80}&0.24&{0.19}\\\hline
{90}&0.33&{0.30}\\\hline
\end{tabular}
\begin{tabular}{|l|l|}
\multicolumn{2}{ c }{\textbf{House}}\\
\hline
\multicolumn{1}{|c|}{\textbf{FW}}& \multicolumn{1}{c|}{\textbf{lin.+l.}}\\\hline
{0}&{0.0}\\\hline
0.0040&{0.0}\\\hline
{0.0022}&{0.0022}\\\hline
0.0049&0.0\\\hline
{0.0}&{0.0}\\\hline
{0.017}&0.022\\\hline
{0.041}&0.051\\\hline
{0.051}&0.067\\\hline
{0.080}&0.14\\\hline
0.12&{0.11}\\\hline
\end{tabular}}

\caption{Test loss versus gap between frames for the hotel/house data measured via the Hamming loss for both MLE-Struct and the method of Caetano et al. (2009)}
\label{fig:hotels}
\end{figure}

We now apply the bipartite matching model to a graph matching problem arising in computer vision over the CMU \emph{house} and \emph{hotel} image sequences. We follow the setup of \citetmain{caetano2009learning}. The data consist of 111 frames of a toy house and 101 separate frames of a toy hotel, each rotated a fixed angle from its predecessor. Each frame was hand-labeled with the same 30 landmark points. We consider pairs of images at a fixed number of frames apart (the \emph{gap}), which we divide into training, validation, and testing sets following the same splits as \citetmain{caetano2009learning}. We measure the average Hamming error between the predicted matching (MAP estimate using our learned parameters) and the ground truth.

We compare our algorithm against the linear+learning method of \citetmain{caetano2009learning}, which fits the parameters of a linear model using the same features as our algorithm but with a hinge loss objective. The results of the experiments with reweighting parameters $\rho = \vec{1}$ are described in Figure \ref{fig:hotels}.  For each subsequence, we chose the regularization parameter via cross-validation.  Both methods perform comparably, with our method doing slightly better on the houses and the method of \citetmain{caetano2009learning} doing slightly better on the hotels. We also compared the performance of our algorithm with different reweighting parameters $\rho$.  Figure~\ref{fig:rho} shows the results for the house data when the gap is $50$ for various choices of $\rho$.  We observed little difference in test error as $\rho$ varies:  this was confirmed over synthetic as well as real data.  As a result, we did not tune $\rho$ for different data/problem setups.

Figure~\ref{fig:mismatch} illustrates one advantage of learning a probabilistic model over a discriminative model: the pseudomarginals indicate the model's confidence in a prediction. In many cases, when the algorithm made the wrong prediction, two edges incident to a specific node had relatively high pseudomarginal probabilities.  In these cases, the errors were not completely unfounded.  Similar image parts were matched, albeit incorrectly.

Algorithm~\ref{alg:fw-learning} permits fast and simple approximate MLE in problems where it was previously quite difficult. Namely, we found that using standard BP approaches~\citepmain{huang2009approximating,bayati2011} to compute marginals for the standard double-loop MLE approach was very unstable for this problem because BP often failed to converge after taking several gradient steps. In later sections, we juxtapose Algorithm~\ref{alg:fw-learning} with alternative approximate MLE approaches. 

\subsection{UNIPARTITE PERFECT MATCHINGS}
\begin{figure}[t!]
\centering
\includegraphics[width=\columnwidth]{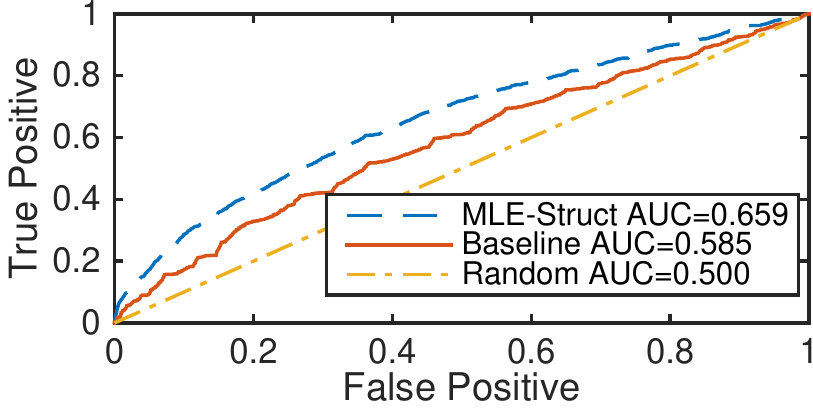}
\caption{ROC curves for roommate matching of our algorithm and a constant baseline.}
\label{fig:roommate-roc}
\end{figure}

\begin{table}
  \centering
  \begin{tabular}{|l|c|c|}
    \hline
    \textbf{Profile Item} &  \textbf{Weight} \\
    \hline
    Smoking	 & -0.0484 \\
    Personality& -0.0370 \\
    I generally go to bed at...	&	-0.0296\\
    I generally wake up at...	& -0.0218\\
    Study with audio/visual& -0.0133\\
    Overnight Guests& -0.0097\\
    Cleanliness	& -0.0056\\
    \hline
  \end{tabular}
  \caption{Largest $\ell_1$ distance features of roommate survey data.  More negative values increasingly discourage features from differing.}
  \label{tbl:roommates}
\end{table}

Many undergraduate institutions assign first-year students to roommates based on questionnaire responses, but allow  returning students to pick their own roommates in subsequent years. Therefore, we can use observed roommate matchings of returning students to train a model for students' preferences. Such a model can then be used by the administration to assign first-year students roommates that they will get along with. 

We obtained an anonymized dataset of campus housing room assignments and questionnaire responses for undergraduate students at a major US institution for the years 2010--2012. We used data from 2010 and 2011 to train and data from 2012 to test. We prune those students who did not live in campus housing or were assigned to single rooms (there were no rooms with three or more residents). The remaining students were thus assigned to one roommate, and form perfect matchings in complete graphs of 2374--2504 nodes. As our data includes neither year nor gender, we treat the entire matching assignment for one year as one observation.

Our questionnaire data consists of 2 binary features and 12 ordinal features of 5 levels each. For each pair of students and each questionnaire question, we created one feature of absolute differences and many interaction features, which consisted of one indicator feature for each possible pair of answers to the questionnaire questions. For simplicity, we assumed symmetric interactions. For each student pair, the weighted score for their matching is a linear combination of features.

We fit a model using MLE-Struct. Table~\ref{tbl:roommates} lists the largest coordinates of $\theta$ for distance features ordinal on ordinal questionnaire responses. Here, more negative values indicate closer agreement required. We see that smoking, personality (introverted vs extroverted), and sleeping habits require the strongest agreement. For more results and details of the feature, see Appendix~\ref{app:expt-details}.

As we are effectively performing multiclass classification of $\approx$ 2500 classes with only $\approx$ 14 features, we do not expect high accuracy in terms of Hamming error. Instead, we consider the use-case where we use the model to reject very bad roommate assignments. To evaluate this, we use our learned $\theta$ to form the cost matrix from features of the test year (2012), and use the entries of this cost matrix as scores for a binary classifier. We then plot ROC curves in Figure~\ref{fig:roommate-roc}, where we demonstrate gains above random guesses and a constant baseline where $\theta = -1$ for distance features and 0 for interactions. In particular, our algorithm dominates in the low false-positive regime of the graph. As competitors, we also evaluated a structured perceptron and structured SVM using the same MAP decoder, but even after extensive parameter tuning, they did not generalize well to test data, and obtained test AUCs worse than the constant baseline.

\subsection{GRID CRFS}
\begin{figure*}[t!]
  \centering
  \begin{subfigure}{\columnwidth}
    \includegraphics[width=\columnwidth]{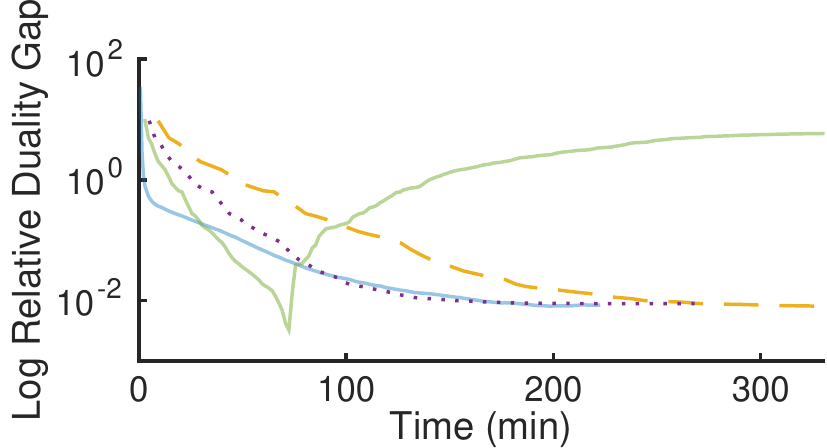}
    \caption{Plot of the difference to the midpoint of the best values attained by each algorithm (duality gap). The domke10 curve intersects, but does not converge to the correct value.}
  \end{subfigure}
  \begin{subfigure}{\columnwidth}
    \includegraphics[width=\columnwidth]{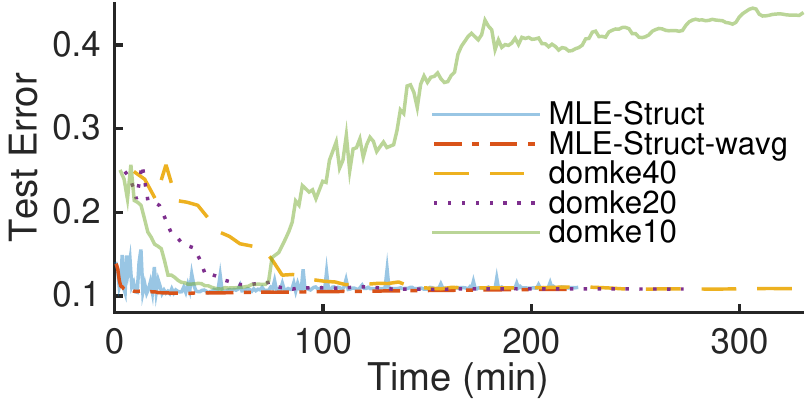}
    \caption{Test error (mean Hamming loss) vs time. MLE-Struct-wavg curve traces the error of our method with averaged iterates, which achieves the lowest test error and gets these fastest.}
  \end{subfigure}
\caption{Horse dataset performance measures. MLE-Struct is our method; legend is same across both plots.}
\label{fig:horseperm}
\end{figure*}

\begin{figure}[t!]
  \begin{subfigure}{\columnwidth}
    \centering
    \includegraphics[width=.39\columnwidth]{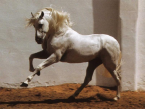}
    \includegraphics[width=.39\columnwidth]{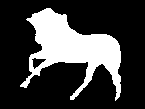}\\
    \caption{Raw test image (\#207) and ground truth segmentation.}
  \end{subfigure}
  \begin{subtable}{\columnwidth}
    \setlength{\tabcolsep}{-0.5pt}
    \begin{tabular}{ m{0.2\columnwidth} m{0.4\columnwidth} m{0.4\columnwidth} }

    \multicolumn{1}{m{0.2\columnwidth}}{} & \textbf{MLE-Struct} & \textbf{domke40}\tabularnewline
    9.2 min. &  \includegraphics[width=0.39\columnwidth]{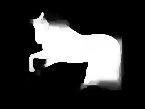} &  \includegraphics[width=0.39\columnwidth]{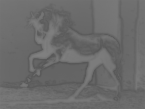}\tabularnewline
 
    3.7 hours & \includegraphics[width=0.39\columnwidth]{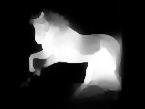} & \includegraphics[width=0.39\columnwidth]{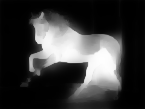}\tabularnewline

    \end{tabular}
    \caption{Estimated marginals of our algorithm and that of \citetmain{domke2013} on image \#207 after various amounts of training time.  Our algorithm terminated after 3.7 hours while that of \citemain{domke2013} took nearly twice as long to converge.}
  \end{subtable}
  \caption{Visual results of the horse dataset.}
  \label{fig:viz}
\end{figure}

Next we study a binary image segmentation problem on the Weizmann horses dataset \citepmain{borenstein2002class}. We formulate this as a pairwise binary model with a variable for each pixel indicating whether it is part of a horse. We used the same features and resized images obtained from \citetmain{domke2013} and kept their split of 200 training and 128 testing images. This results in grid CRFs of approximately 10,000--40,000 nodes.

In initial experiments, we tried a naive double-loop method of subgradient descent on $\theta$ using belief propagation to compute subgradients of the TRW log-partition function. This method was too slow and unstable for even small real problems. Instead we compared our algorithm against the method proposed by \citetmain{domke2013}, which solves \eqref{eq:saddle} by using a small, fixed number of iterations of TRW to approximate the partition function and thereby the gradient of of the approximate likelihood.  The outer loop runs L-BFGS until convergence.  Limiting the number of inner TRW iterations is key to this algorithm's efficiency, which burdens the user with another tuning parameter.

In Figure~\ref{fig:horseperm}, we compare two variants of our algorithm with three variants of the algorithm of \citetmain{domke2013} in terms of objective value and test error over time. Both methods optimize the same objective---we optimize the dual while they optimizes the primal. We set $\lambda = 3420$ in our parameterization which matches the setting for their published result. The MLE-Struct curves result from applying the block-coordinate version of our Algorithm~\ref{alg:fw-learning}. We obtained the fastest convergence without using linesearch. MLE-Struct-wavg uses the same algorithm, but evaluates the test error using a weighted average of the iterates as described by \citetmain{lacoste2013block}. The curve for averaged iterates is substantially smoother than the raw MLE-Struct curve, and very quickly attains a low test error. The domke$x$ curves result from running their algorithm for $x$ TRW inner loop iterations. This method is not guaranteed to converge to the global optimum for any finite $x$. A practitioner must run the algorithm for a sequence of increasing values of $x$ to confirm convergence to the correct value. In contrast, our FW algorithm requires only a single run and computes an upper-bound on the duality gap as a byproduct~\citepmain{jaggi2013revisiting}.

In early iterations our algorithm achieves the lowest objective value and test error. Our algorithm attains low test error even when the objective value is relatively far from optimal. This phenomenon is a result of the dual formulation: we iteratively move $\tau$ to minimize the objective, but for each value of $\tau$, we compute the optimal $\theta$ as a linear function of $\tau$. While $\tau$ may initially be very inaccurate, contributing to a large objective value, $\theta$ is much lower dimensional, so some of the errors may cancel in computing $\theta$, resulting in good predictions nevertheless. In contrast, the method of~\citetmain{domke2013} iteratively moves $\theta$, and for each value of $\theta$, computes the optimal value of $\tau$ using TRW. Prior to convergence, this may enable better fit to the training data at the expense of accurate estimation of $\theta$.

The effect is readily apparent when visualizing predicted marginals on the test set in Figure~\ref{fig:viz}. The domke40 method takes 9.2 minutes to complete one iteration. Parameters after one iteration are nowhere near the MLE, as evidenced by the first row of \ref{fig:viz}(b) and the early portion of the objective value plot \ref{fig:horseperm}(a); the mean Hamming loss on this sample is 0.249. Their marginal estimates at this point have only used local intensity data: light regions are classified as ``horse'' and dark regions are classified as ``not horse.'' In contrast, in about the same time, our BCFW method already ran for 12000 iterations, has made 60 passes over the training data, and essentially recovers the correct segmentation (except for difficult portions on the mane and hind legs, where the background texture is confusing) with mean Hamming loss of 0.068. After 3.7h, both algorithms produce comparable visualizations, though we get 0.074 normalized Hamming error on this sample, while domke40 gets 0.109. It takes 8.2h for the domke40 to converge according to its internal criteria, attaining a final test error of 0.0813 on this image.

\section{DISCUSSION AND CONCLUSION}
In maximum likelihood estimation of discrete exponential family models, replacing the Gibbs free energy with a convex free energy approximation leads to a concave-convex saddle point problem. We have shown that adding a quadratic regularizer enables a closed-form maximization, leaving a single convex minimization problem, which can be solved efficiently using the Frank-Wolfe algorithm. We can scale to large datasets by using block-coordinate Frank-Wolfe, and rapidly achieve low test error by solving the dual objective. This method accomplishes approximate MLE using a simple wrapper around a black-box MAP solver. Previously, practitioners either employed expensive double-loop MLE procedures or they abandoned MLE by resorting to structured SVMs and perceptrons. Our method is competitive with max-margin MAP-based estimation methods in terms of prediction error and faster than competing MLE methods in minimizing test error, while being simple to implement. In future work, we will extend the method to other combinatorial problems, incorporate structure learning with $\ell_1$ regularization, and handle latent variable models as a single minimax problem.

\bibliographystylemain{icml2015}
\bibliographymain{biblio}

\newpage
\onecolumn
\appendix
\begin{center}
\Large{Supplementary Material}
\end{center}

The following material is used to provide details about techniques employed in the paper. Part~\ref{app:expt-details} presents details of experimental setup and additional results. Then in part~\ref{app:fw} we discuss the convergence properties of FW for our problems. In part~\ref{app:conv} we prove convexity of the Bethe free energy for general matchings. Part~\ref{app:fwmatch} derives an instance of our Algorithm~\ref{alg:fw-learning} for matchings, and part~\ref{app:fw-crf} derives the same for linear CRFs.

Then, we present a full derivation of the dual objective and line search objective for doing FW learning for graph matchings. This is presented in terms of matrix-valued terms (various weighted adjacency matrices for the graph), which facilitates easy implementation, since MAP solvers operate on such matrices. 

\section{Details on Experiments}
\label{app:expt-details}

In this appendix, we explain further details of experiments.

\subsection{Synthetic Experiments}
\begin{figure}
  \centering
  \begin{subfigure}[b]{.49\columnwidth}
    \includegraphics[width=\columnwidth]{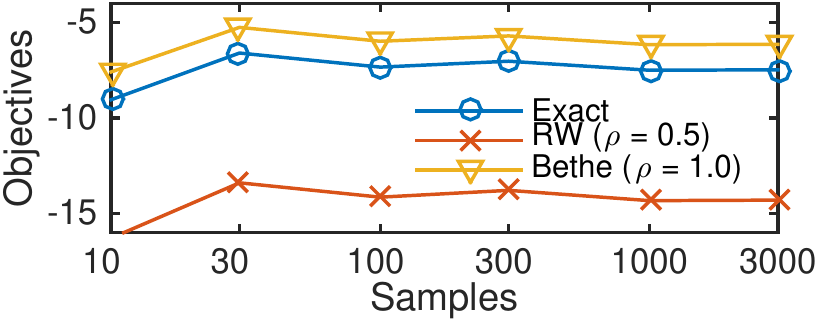}
    \includegraphics[width=\columnwidth]{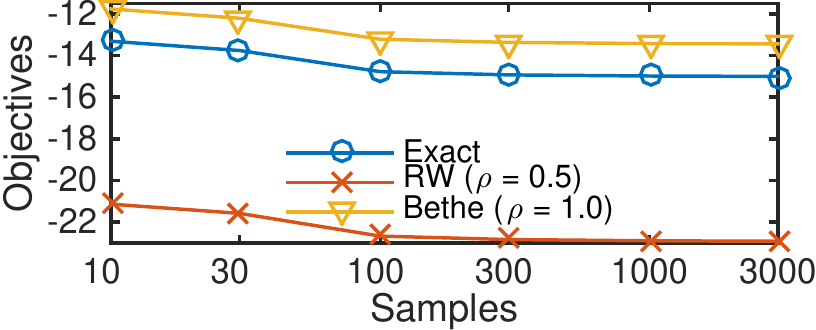}
    \caption{Exact and approximate likelihoods evaluated at the parameters that maximize the Bethe log-likelihood.}
  \end{subfigure}
  \begin{subfigure}[b]{.49\columnwidth}
    \includegraphics[width=\columnwidth]{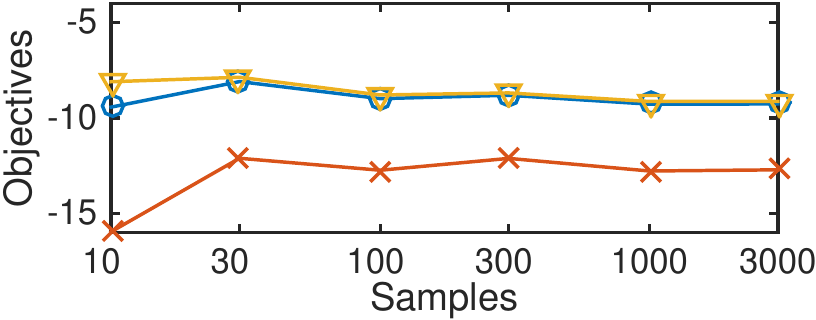}
    \includegraphics[width=\columnwidth]{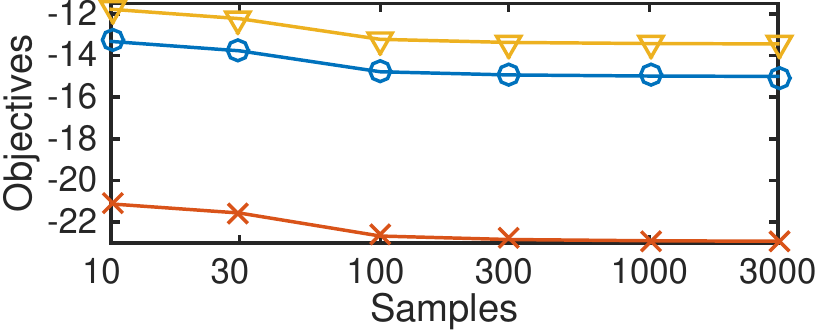}
    \caption{Exact and approximate likelihoods evaluated at the parameters that maximize the RW log-likelihood.}
  \end{subfigure}

  \caption{Sandwich bounds on the likelihood at the Bethe and TRW estimators. Top is the high SNR problem, bottom is the low SNR problem.}
  \label{fig:syn-cross-plots}
\end{figure}

For matchings, the TRW likelihood lower bounds the true likelihood, while the Bethe likelihood upper bounds the true likelihood at all parameter values. Thus, we can obtain two-sided bounds on the likelihood of $W^*_\text{TRW}$ by evaluating $\ell_B(W^*_\text{TRW})$, and of the likelihood of $W^*_B$ by evaluating $\ell_\text{TRW}(W^*_B)$. Figure~\ref{fig:syn-cross-plots} displays these results. The approximate likelihoods were computed using FW for inference, using the procedure described in Sec.~\ref{sec:fw-marg} while the exact likelihood was computed using Ryser's algorithm, which was feasible for this small problem.

\subsection{Horse Experiments}
Timing experiments were performed a dedicated 8 core (16 hyper-threads), 2.67GHz Intel Xeon X5550 machine with 24 GB of physical RAM, running Ubuntu 12.04.5 LTS and Matlab R2012b. Computations were restricted to a single core, and at most two experiments were run at a time. Our algorithms were implemented in Matlab, interfacing with combinatorial solvers written in C or C++. We downloaded the code for \citepappendix{domke2013} and \citepappendix{caetano2009learning} from the authors' websites, which were implemented in C++ and Matlab with C extensions, respectively. We obtained the original experiment scripts for \citepappendix{domke2013} through correspondence with the author. 

\subsection{Roommate Experiments}

This dataset was obtained from a major US university over a three year period from 2010-2012.  The anonymized dataset consists of roommate assignments for pairs of students in each of the three years.  In addition, each of the students was required to complete a brief housing survey that asked for their preferences in terms of cleanliness, sleeping schedule and habits, personality, study preferences, etc.  Our questionnaire data consists of 2 binary features and 12 ordinal features of 5 levels each. For each pair of students and each questionnaire question, we created one feature of absolute differences and several interaction indicator features, one for each possible pair of answers to the questionnaire questions. For simplicity, we assumed symmetric interactions.  For each student pair, the weighted score for their matching is a linear combination of features.   The learned weights for each \emph{distance} features and their relative rankings are described in Figure \ref{fig:roommates}.  As we are using a log-linear model, the weights should be interpreted as log-odds ratio for a unit increase in absolute distance (assuming ordinal features).

A few qualitative observations about these results.  First,  \emph{single-sex floor}, \emph{rising sophomore}, and \emph{allow in Brownstone} all received relatively low weights, indicating perhaps that the data was too noisy with respect to these survey responses.  Second,  \emph{personality}, \emph{smoking}, and \emph{bedtime} were among the strongest predictors of a successful match while \emph{cleanliness}, \emph{study hours}, \emph{study location} were among the least important. 

When comparing to the BCFW algorithm for a structured SVM, we employed the publicly-available code from the authors. We tried regularization parameter lambda values in the range [10e-4, 10e2]. Our best-performing configuration achieved an AUC of .504 and a hamming error of .9992, which outperforms random guessing, but significantly underperforms a model trained with MLE. 
\begin{figure}
\centering
\begin{tabular}{|l|c|c|}
\hline
Profile Item Description	&  Weight	& Rank \\
\hline
I generally go to bed at...	&	-0.0296	& 3\\
I generally wake up at...	& -0.0218 &	4\\
Rising Sophomore	& 0 &	10\\
Cleanliness	& -0.0056 &	7\\
Smoking	 & -0.0484 &	1\\
Sleeping Habits & -0.0025 &	8\\
Overnight Guests& -0.0097 & 6\\
Personality& -0.037 &2\\
Usual Study Hours&0.0131 &	12\\
Study Location&-0.0006&	9\\
Study with audio/visual& -0.0133 &	5\\
Single-sex floor (1)&0.2277 &14\\
Single-sex floor (2)& 0.0518 & 13\\
Allow in Brownstone&0&10\\
\hline
\end{tabular}
\caption{Distance features for the roommates experiments, their learned weights, and their relative importance ranking with regularization $\lambda=100$.}
\label{fig:roommates}
\end{figure}

\section{Convergence of Frank-Wolfe}
\label{app:fw}
Recall the following approximate max-entropy objective function.
\begin{align*}
L(\tau^{(1:m)}) = \frac{1}{2\lambda}\|\theta^*(\tau^{(1:M)})\|^2 - \sum_m H_\rho(\tau^{(m)}),
\end{align*}
In this appendix, we discuss the convergence rate of the FW algorithm for the minimization of the convex function $L$ over the local polytope.  \citetappendix{jaggi2013revisiting} has shown that the suboptimality of the iterates of FW decays as 
\begin{equation}
\label{eq:convergence}
L(\tau_t) - L(\tau^* ) \leq \frac{2 C_L}{t + 2}(1 +
\delta),
\end{equation}
where $\delta$ is the accuracy to which each of the linear subproblems (i.e., the optimization problem performed at each iteration) is solved and $C_L$ is the \textit{curvature} of the function $L$.

Curvature is a stronger notion of the function's geometry than its
Lipschitz parameter, since it is affine-invariant, like the entire
Frank-Wolfe algorithm~\citepappendix{jaggi2013revisiting}.  The curvature of a differentiable function $F:X \rightarrow \mathbb{R}$ is given by
\begin{equation}
\label{eq:curvature}
C_F = \sup_{\begin{subarray}{c}
        x,x' \in X, \; \gamma \in [0,1] \\ y = x + \gamma (x' - x)
      \end{subarray}}
 \frac{2}{\gamma^2}\left(F(y) - F(x) - \dt{y - x}{\nabla F(x)}\right).
\end{equation}
The curvature, $C_F$, quantifies how much $F$ can differ from its linearization.  For twice differentiable functions, the curvature can be upper bounded as follows.
\begin{align*}
C_F \leq \sup_{\begin{subarray}{c}x,x'\in X, \; \gamma \in [0,1] \\ y = x + \gamma (x' - x)\end{subarray}} \frac{1}{2} (y-x)^T\nabla^2 F (y-x)
\end{align*}
For our objective function, the curvature is most heavily influenced by the entropy term as the curvature of the quadratic piece is simply a constant that depends on the $\phi$'s.  Unfortunately, the curvature of $L$ is unbounded: as one approaches integer points in the local polytope, the entropy approximation becomes arbitrarily steep. Therefore, the worst-case convergence rate of FW for this problem is unbounded.  In order to obtain a bounded curvature, we could strengthen the box constraints in the marginal polytope by requiring
\begin{align*}
\tau_i(y_i) \in [\eta, 1-\eta]& \text{ for all } i\in V, y_i\in\mathcal{Y}\\
\tau_\alpha(y_\alpha) \in [\eta, 1-\eta]& \text{ for all } \alpha\in\mathcal{A}, y_\alpha\in\mathcal{Y}^{|\alpha|}
\end{align*}
for some $\eta \in (0,.5)$. For each $\tau_i(x_i)$, the part of the entropy approximation that depends on $\tau_i(x_i)$ looks like
\[\left(1 - \sum_{\alpha\supset i} \rho_\alpha\right)\tau_i(x_i)\log \tau_i(x_i).\]
That is, all we need to do is compute bounds on the curvature for functions of the form $f(x) = c\cdot x\log x$.  For this function,
\begin{align*}
C_f \leq \frac{c(x'-x)^2}{2y}
\end{align*}
for any $y\in[x, x']$.  To bound this quantity from above, we can pick $x = \eta$, $x' = 1 -\eta$, and $y =\eta$.  This gives
\[C_f \leq \frac{c(1-2\eta)^2}{2\eta},\]
which is a fixed constant and tends to zero as $\eta$ tends towards $.5$.  Hence, as long as the optimal pseudomarginals lie strictly inside $[0,1]$, then this modified FW is always guaranteed a linear rate of convergence to the optimum.  The pseudomarginals produced by both BP and RBP often lie strictly inside the box constraints, so this is typically not an issue in practice.  In order to find an appropriate $\eta$, we could use, for example, the bounds on the pseudomarginals proposed by \citetappendix{mooij2007}.   These bounds are obtained by running BP/RBP for a fixed number of iterations.  An appropriate $\eta$ can then be selected such that the interval $[\eta, 1-\eta]$ contains all of the bounds as in \citetappendix{mooij2007}.

Adding additional box constraints is expensive (we can no longer use the combinatorial algorithms that we used for matching and pairwise binary MRFs), and they may not be necessary in practice. For pseudomarginals $\tau$, the steepness only becomes unmanageable when there are components of $\tau$ that are close to 0 or 1 (in a Gibbs distribution, no clique assignment ever has zero probability, so
$\nabla L$ is always well-defined). If the iterates of the algorithm never get too close to the boundary of $\mathcal{T}$, then the effective curvature term will be reasonable. Of course, if the optimal pseudomarginals of the
reweighted approximation are close to the boundary of
$\mathcal{T}$, then $C_L$ will be large in the neighborhood of the
solution, and we should not expect fast convergence. A rough way to estimate this distance from the boundary is via the entropy of the pseudomarginals, and our experience has shown that the algorithm converges faster when the true pseudomarginal distribution has higher entropy.  

\section{Convexity of the Bethe Free Energy for General Matchings}
\label{app:conv}
In this appendix, we argue that the Bethe free energy for the (not necessarily perfect) matching problem is convex over general graphs.  The convexity of the Bethe approximation for the bipartite matching problem was investigated experimentally by \citetappendix{huang2009approximating} and the proven by \citetappendix{betheperm}.   The same argument holds for any choice of reweighting parameters such that $\rho_i\in[0,1]$ for all $i$.  For simplicity we only argue the case $\rho_i = 1$ for all $i$, the general case is similar to Theorem 60 in \citetappendix{betheperm}.  The entropy and polytope approximations are formulated as follows.
\begin{align*}
H'_\rho(\tau) \triangleq &\sum_{(i,j)\in E} \Big[(\rho_i + \rho_j - 1)(1-\tau_{ij})\log(1-\tau_{ij}) - \tau_{ij}\log\tau_{ij} \Big]\\
\:& - \sum_{i\in V} \rho_i(1-\sum_{j\in\partial i} \tau_{ij})\log (1-\sum_{j\in\partial i} \tau_{ij})
\end{align*}
where $\tau$ is restricted to $\mathcal{T}' = \{\tau\geq 0 : \mbox{for all } i \in V, \sum+{j\in\partial i} \tau_{ij} \leq 1\}$.

The proof of convexity follows directly from the concavity of the function 
\[S_n(x_1,\ldots,x_n) = \sum_{i=1}^n (1-x_i)\log (1-x_i) - x_i\log x_i \]
for each $n \geq 1$ \citepappendix{betheperm}.

\begin{theorem}
For any $\rho \in [0,1]^{|V|}$, any graph (bipartite or general), and any matching (perfect or imperfect), the reweighted free energy~\eqref{eq:rfe} is convex over the local polytope.
\end{theorem}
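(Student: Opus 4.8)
The plan is to reduce the claim to the concavity of the matching entropy and then invoke the concavity of the building-block function $S_n$ that is supplied just before the statement. First I would note that in the reweighted free energy~\eqref{eq:rfe} the energy term $E(\tau,X;\theta)$ is linear in $\tau$, so $\log F_\rho$ is convex over the local polytope if and only if the entropy term is concave there. For the matching model the pseudomarginals are indexed by edges, and the generic $H_\rho$ specializes to the binary edge-variable entropy $H'_\rho$ written above, with $\tau$ ranging over $\mathcal{T}' = \{\tau \ge 0 : \sum_{j\in\partial i}\tau_{ij}\le 1 \text{ for all } i\}$. Thus it suffices to prove that $H'_\rho$ is concave on $\mathcal{T}'$.

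Next I would establish the $\rho = \vec{1}$ case and then lift to general $\rho\in[0,1]^{|V|}$. For $\rho_i = 1$ the edge coefficients $\rho_i + \rho_j - 1$ all equal $1$, and $H'_\rho$ reduces to a sum of edge terms $(1-\tau_{ij})\log(1-\tau_{ij}) - \tau_{ij}\log\tau_{ij}$ together with vertex slack terms $-s_i\log s_i$, where $s_i = 1 - \sum_{j\in\partial i}\tau_{ij}$. The key step is to group these terms by vertex so that each group becomes an affine precomposition of $S_{n_i}$ evaluated at the incident edge variables $(\tau_{ij})_{j\in\partial i}$, with $n_i = |\partial i|$ and the slack accounting for the remaining mass under the constraint $\sum_{j}\tau_{ij}\le 1$. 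Since concavity of $S_n$ is given for every $n\ge 1$, and since $-s_i\log s_i$ and $-\tau_{ij}\log\tau_{ij}$ are individually concave (being affine precompositions of $-t\log t$), the whole sum is concave as a nonnegative combination of concave functions composed with affine maps.

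To handle general $\rho\in[0,1]^{|V|}$ I would track the coefficients carefully, mirroring Theorem 60 of~\citetappendix{betheperm}. The point is that for $\rho_i\in[0,1]$ each copy of a concave building block enters $H'_\rho$ with a controlled weight, and wherever the convex function $(1-\tau_{ij})\log(1-\tau_{ij})$ appears with coefficient $\rho_i+\rho_j-1$ one must verify that the accompanying slack and entropy contributions still combine into a concave whole. Concavity is preserved under nonnegative linear combinations and affine precomposition, so once the decomposition is exhibited the result follows; convexity over the marginal polytope is then immediate by inclusion, since the marginal polytope is contained in $\mathcal{T}'$.

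I expect the main obstacle to be the sign bookkeeping in the general-$\rho$ decomposition: the coefficient $\rho_i+\rho_j-1$ can be negative, so the convex term $(1-\tau_{ij})\log(1-\tau_{ij})$ may contribute with either sign, and one must confirm that for every $\rho\in[0,1]^{|V|}$ the grouping into $S_n$-blocks together with the slack and $-\tau\log\tau$ terms leaves only concave pieces. Checking this uniformly over both bipartite and general graphs, and over perfect and imperfect matchings (where the slack variables $s_i$ may or may not be active at the relevant vertices), is the delicate part; the remainder is a routine application of the closure properties of concave functions.
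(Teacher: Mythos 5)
Your proposal is correct and follows essentially the same route as the paper's proof: linearity of the energy reduces the claim to concavity of the matching entropy, which is then decomposed vertex-by-vertex into copies of $S_n$ (with the slack $1-\sum_{j\in\partial i}\tau_{ij}$ appended as an extra coordinate so that each block lies on the simplex where $S_n$ is concave), with the general $\rho\in[0,1]^{|V|}$ case deferred to the coefficient tracking of Theorem 60 of \citetappendix{betheperm}, exactly as the paper does. The only bookkeeping you leave implicit, and which the paper writes out, is that each edge term must be split in half between its two endpoints, leaving a residual binary entropy $h(1-\sum_{j\in\partial i}\tau_{ij})$ at each vertex.
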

\begin{proof}
For the case of the perfect matching problem on a graph $G$, the entropy term of the Bethe free energy can be written as
\begin{align*}
H'_{\vec{1}}(\tau) = & \Big[\sum_{(i,j)\in E} (1-\tau_{ij})\log(1-\tau_{ij}) - \tau_{ij}\log\tau_{ij}\Big] - \sum_{i\in V} (1-\sum_{j\in\partial i} \tau_{ij})\log (1-\sum_{j\in\partial i} \tau_{ij})\\
= & \sum_{i\in V} \Big[-(1-\sum_{j\in\partial i} \tau_{ij})\log (1-\sum_{j\in\partial i} \tau_{ij}) + \frac{1}{2}\sum_{j\in\partial i} \Big((1-\tau_{ij})\log(1-\tau_{ij}) - \tau_{ij}\log\tau_{ij}\Big)\Big]\\
= & \sum_{i\in V}\Big[ \frac{1}{2}S(\tau_{i,\partial i}, 1-\sum_{j\in\partial i} \tau_{ij}) + \frac{1}{2}h(1-\sum_{j\in\partial i} \tau_{ij})\Big].
\end{align*}
Here, $h(x) = -x\log x - (1-x)\log(1-x)$ is the entropy function.  As both $S$ and $h$ are concave functions, the entropy function is concave which implies that the free energy approximation is convex.
\end{proof}

\section{Frank-Wolfe and Matchings}
\label{app:fwmatch}
In this appendix, we describe a conditional random field over perfect matchings, formulate the approximate learning problem in this context, and describe the linesearch procedure used as part of the FW algorithm.

Assume we have $M$ observations, consisting of $N$ items matched to $N$
other items. We represent the $m$'th observation by $\left(W^{(m)},X^{(m)},Y^{(m)}\right)$
where the $W$ and $X$ are $N\times D_{W}$ and $N\times D_{X}$
data matrices and $Y^{(m)}$ is an $N\times N$ column permutation
matrix
\footnote{That is, if $i$ maps to $j$, then $Y_{ji}=0$ and $Y_{ki}=0$ for
$k\neq j$.
}. Note that $W$ and $X$ contain the data for the two separate parts
of the graph.

In general, conditional random field features can be arbitrary functions
of $\left(W,X,Y\right)$. To produce a model whose MAP solution is a
maximum-weight perfect matching, we require the features to be linear
in $Y$. Since $Y_{ji}$ denotes the presence
or absence of edge $(i,j)$, its coefficient ought to depend only
on the data for items $i$ and $j$. Therefore, we use the feature
map $F_{k}(W,X,Y)=\left\langle G_{k}(W,X),Y\right\rangle $ where
$\left(G_{k}(W,X)\right)_{ij}=g_{k}\left(w_{i},x_{j}\right)$. That
is, the $k$'th feature is a linear function of $Y$ with coefficients
given by applying a single function $g_{k}:\mathbb{R}^{D_{X}}\times\mathbb{R}^{D_{W}}$
to every pair of rows in $W$ and $X$. We will have $K$ features
in total. We now write $G_{k}^{(m)}=G_{k}\left(W^{(m)},X^{(m)}\right)$
and dispense with $W$ and $X$. The probability of one observation
is thus
\[
p(Y|G_{1:K};\theta)=\frac{1}{Z(\theta)}\exp\left(\sum_{k}\theta_{k}\left\langle G_{k},Y\right\rangle \right)
\]
So the log-likelihood for $M$ i.i.d. observations is
\begin{equation}
\ell\left(\theta;Y^{(1:M)},G_{1:K}^{(1:M)}\right)=\sum_{m}\sum_{k}\theta_{k}\left\langle G_{k}^{(m)},Y^{(m)}\right\rangle -\log Z\left(\theta,G_{1:K}^{(m)}\right)\label{eq:loglike}
\end{equation}
We focus on the case $K\leq MN^{2}$.

\subsection{Minimax Formulation}

We now replace $\log Z$ with $\log Z_{B,\rho}$ and add an $L_{2}$
regularizer. Note that $\rho$ is an $N$-vector reweighting parameter
with entries between 0.5 and 1. Using the variational formulation
of the (Bethe) free energy, we write the maximum Bethe likelihood
problem as a minimax problem, which we further analytically reduce
to a convex program with linear constraints. Begin with
\begin{equation}
-\log Z_{B}\left(\theta,G_{1:K}^{(m)}\right)=\min_{T\in{\cal M}}-\sum_{k}\theta_{k}\left\langle G_{k}^{(m)},T\right\rangle -H_\rho(T).\label{eq:bfe}
\end{equation}
To simplify subsequent derivations, let $y^{(m)}=\vvec\left(Y^{(m)}\right)$,
$\tau^{(m)}=\vvec\left(T^{(m)}\right)$, and $G^{(m)}$ be an $N^{2}\times K$
matrix whose $k$'th column is given by $\vvec\left(G_{k}^{(m)}\right)$.
In the sequel, we will use the reweighting parameters in the form
of pairwise sums $\rho_{i}+\rho_{j}$. Thus, let $R$ be $N\times N$
matrix where $R_{ij}=\rho_{i}+\rho_{j}$ and let $r=\vvec(R)$. Additionally,
define $y,\tau,$ and $G$ by vertically stacking all $M$ members
of $y^{(m)},\tau^{(m)},$ and $G^{(m)}$. Thus we can rewrite $\sum_{m}\theta_{k}\left\langle G_{k}^{(m)},Y^{(m)}\right\rangle =\theta^{\top}\left(G^{\top}y\right)$.
Plugging (\ref{eq:bfe}) into (\ref{eq:loglike}) and adding a quadratic
penalty gives the problem
\begin{eqnarray}
 &  & \max_{\theta}\theta^{\top}\left(G^{\top}y\right)-\frac{\lambda}{2}\left\Vert \theta\right\Vert _{2}^{2}+\sum_{m}\min_{\tau^{(m)}\in{\cal M}}-\theta^{\top}\left(G^{(m)\top}\tau^{(m)}\right)-H_\rho\left(\tau^{(m)}\right)\nonumber \\
 & = & \max_{\theta}\theta^{\top}\left(G^{\top}y\right)-\frac{\lambda}{2}\left\Vert \theta\right\Vert _{2}^{2}+\min_{\tau\in{\cal M}^{M}}-\theta^{\top}\left(G^{\top}\tau\right)-\sum_{m}H_\rho\left(\tau^{(m)}\right)\nonumber \\
 & = & \min_{\tau\in{\cal M}^{M}}\max_{\theta}\theta^{\top}\left(G^{\top}(y-\tau)\right)-\frac{\lambda}{2}\left\Vert \theta\right\Vert _{2}^{2}-\sum_{m}H_\rho\left(\tau^{(m)}\right)\label{eq:minimax1}\\
 & =: & \min_{\tau\in{\cal M}^{M}}\max_{\theta}f\left(\tau,\theta\right)\nonumber 
\end{eqnarray}
The second line is justified because the minimizations in $\tau^{(m)}$
are separable, so the $\min$ and sum operators commute. The cost
is that we must now minimize over a larger product space ${\cal M}^{M}$,
but we will see later why this is not a problem. The last line follows
from Sion's minimax theorem: the minimization domain ${\cal M}^{M}$
is compact convex, and the objective is convex in the minimization
variable $\tau$ and concave in the maximization variable $\theta$
\citepappendix{sion1958}. The theorem requires only \emph{one} compact domain,
so $\theta$ can remain unconstrained. Thus, for any $\tau$, the
concave function $f\left(\tau,\cdot\right)$ attains its maximum at
the stationary point $0=\nabla_{\theta}f=G^{\top}(y-\tau)-\lambda\theta$,
e.g. $\theta=\lambda^{-1}G^{\top}(y-\tau)$. Moreover, $f\left(\tau,\cdot\right)$
is \emph{strictly }concave for $\lambda>0$\emph{,} so the maximum
is unique. Plugging in to (\ref{eq:minimax1}) and simplifying, we
get
\begin{eqnarray}
 &  & \min_{\tau\in{\cal M}^{M}}\frac{1}{2\lambda}\left\Vert G^{\top}(y-\tau)\right\Vert _{2}^{2}-\sum_{m}H_\rho\left(\tau^{(m)}\right)\label{eq:obj1}\\
 & =: & \min_{\tau\in{\cal M}^{M}}h\left(\tau\right)\nonumber 
\end{eqnarray}

\subsection{Line search}
\label{app:line-search}
To compute the next iterate of FW, $\tau_{t+1}$, we use linesearch. Write $\tau_{t+1}=(1-\tau)\tau_{t}+\eta\tau_{t+1}^{*}$.
Plugging in to (\ref{eq:obj1}), we get
\begin{eqnarray}
h_{t}(\eta) & := & \frac{1}{2\lambda}\left\Vert G^{\top}\left(y-(1-\eta)\tau_{t}-\eta\tau_{t+1}^{*}\right)\right\Vert ^{2}-\sum_{m}H_{RW}\left((1-\eta)\tau_{t}^{(m)}+\eta\tau_{t+1}^{*(m)};\rho\right)\nonumber \\
 & = & \frac{1}{2\lambda}\left\{ \left\Vert G^{\top}\left(y-\tau_{t}\right)\right\Vert ^{2}+2\eta\left(y-\tau_{t}\right)^{\top}GG^{\top}\left(\tau_{t}-\tau_{t+1}^{*}\right)+\eta^{2}\left\Vert G^{\top}\left(\tau_{t}-\tau_{t+1}^{*}\right)\right\Vert ^{2}\right\}\nonumber\\
 & & -\sum_{m}H_{RW}\left((1-\eta)\tau_{t}^{(m)}+\eta\tau_{t+1}^{*(m)};\rho\right)\label{eq:linesearch}
\end{eqnarray}
Thus we can precompute the expensive matrix products in the quadratic
term.

\subsection{General Matchings}
The above FW procedure only requires a few changes when switching from complete bipartite graphs to general graphs.
The same equations and steps hold when we replace biadjacency feature matrices with adjacency features, and permutation matrices with matrices representing perfect matchings.
There are only a few technical caveats.
First, for general graphs we need to be able to allow some $\tau_{ij}$ to be zero.
This can occur because either there is no edge between $i$ and $j$, or $i$ and $j$ are neighbors, but there is no possible perfect matching in which they are linked.
For both of these cases, we simply clamp $\tau_{ij}$ at zero.
Similarly, some edges occur in every perfect matching, so we need to discover these a-priori and clamp $\tau_{ij}$ at one.
Second, unlike for bipartite matching, initialization of $\tau$ is non-trivial, since the set of neighbors $\text{Nb}(i)$ is different for every $i$.
We cannot choose an integral $\tau$ from the local marginal polytope as an initial point, since the curvature is infinite there.
Instead, for every edge in the graph, we can find one matching that contains that edge and one matching does not by solving a series of matching problems.
We average all of these matchings to obtain an initial feasible point.

\section{Frank Wolfe and Linear CRFs}
\label{app:fw-crf}

\subsection{Notation}

We work with a conditional random field of $L$ labels over the graph
$G=(N,E)$ in the standard overcomplete parameterization. That is,
$y_{n}$ is an $L\times1$ indicator vector for the state of node
$v$, and $y_{e}$ is an $L\times L$ indicator matrix for the state
of edge $e$. We will also treat $y_{e}$ as a vector when convenient.
We denote an element of a matrix or vector by parentheses. For node
$n$, let $u_{n}$ be its $C\times1$ feature vector and for edge
$e$, let $v_{e}$ be its $D\times1$ feature vector. Implicitly,
these feature vectors are derived from applying some function to an
input vector $x$. We refer to elements of a vector We will learn
a linear map for the node and edge parameters:
\begin{eqnarray*}
\theta_{n} & = & Fu_{n}\quad\forall n\in N\\
\theta_{e} & = & Gv_{e}\quad\forall e\in E
\end{eqnarray*}
Now suppose we have $M$ exchangeable samples, and let the superscript
$\cdot^{m}$ denote the observation belonging to the $m$th sample.
Our joint log-likelihood is thus
\[
\ell\left(F,G;y,u,v\right)=\sum_{m}\left(\sum_{n}y_{n}^{m\top}Fu_{n}+\sum_{e}y_{e}^{m\top}Gv_{e}\right)-\log Z\left(F,G,u,v\right)
\]

\subsection{Minimax Formulation}

We replace $\log Z$ with a parameterized surrogate likelihood $\log Z_{\rho}$
which interpolates between the TRW and Bethe approximations. We use
the variational formulation of $\log Z_{\rho}$, over the \emph{local}
polytope ${\cal T}$. Note that the Bethe approximation is not convex
in this setting, but TRW is. For grid MRFs, each edge has probability $0.5$ of appearing in a
spanning tree, so we set $\rho=0.5$ for each edge.

Since we are estimating matrix parameters, we add a Frobenius penalty.
The minimax formulation is thus
\begin{eqnarray}
 &  & \max_{F,G}\sum_{m}\left(\sum_{n}y_{n}^{m\top}Fu_{n}^{m}+\sum_{e}y_{e}^{m\top}Gv_{e}^{m}\right)-\frac{\lambda}{2}\left\Vert F\right\Vert _{F}^{2}-\frac{\lambda}{2}\left\Vert G\right\Vert _{F}^{2}\nonumber \\
 &  & +\sum_{m}\min_{\mu^{m}\in{\cal T}}-\left(\sum_{n}\mu_{n}^{m\top}Fu_{n}^{m}+\sum_{e}\mu_{e}^{m\top}Gv_{e}^{m}\right)-H_{\rho}(\mu)\nonumber \\
 & = & \min_{\mu\in{\cal T}^{M}}\max_{F,G}\sum_{m}\left(\sum_{n}\left(y_{n}^{m}-\mu_{n}^{m}\right)^{\top}Fu_{n}^{m}+\sum_{e}\left(y_{e}^{m}-\mu_{e}^{m}\right)^{\top}Gv_{e}^{m}\right)\label{eq:minimax}\\
 &  & -\frac{\lambda}{2}\left\Vert F\right\Vert _{F}^{2}-\frac{\lambda}{2}\left\Vert G\right\Vert _{F}^{2}-\sum_{m}H_{\rho}\left(\mu^{(m)}\right)\nonumber 
\end{eqnarray}
where the reweighted approximate entropy is given by
\begin{eqnarray}
H_{\rho}(\mu) & := & \sum_{n\in N}H(\mu_{n})-\sum_{nn'\in E}\rho_{nn'}I(\mu_{nn'})\nonumber \\
 & = & \sum_{n\in N}H(\mu_{n})-\sum_{n\in N}\sum_{n'\in\Ne(n)}\rho_{nn'}\left[H(\mu_{n})+H(\mu_{n'})\right]+\sum_{nn'\in E}\rho_{nn'}H(\mu_{nn'})\nonumber \\
 & = & \sum_{n\in N}\left(1-\sum_{n'\in\Ne(n)}\rho_{nn'}\right)H(\mu_{n})+\sum_{nn'\in N}\rho_{nn'}H(\mu_{nn'})\label{eq:entropy}
\end{eqnarray}
where $I(\mu_{nn'})=\sum_{y_{n}y_{n'}}\mu_{nn'}(y_{n},y_{n'})\log\left[\mu_{nn'}(y_{n},y_{n'})/\mu_{n}(y_{n})\mu_{n'}(y_{n'})\right]$
is the mutual information between variables $n$ and $n'$ and $H(\mu_{n})=-\sum_{y_{n}}\mu_{n}(y_{n})\log\mu_{n}(y_{n})$
and $H(\mu_{nn'})=-\sum_{y_{n}y_{n'}}\mu_{nn'}(y_{n},y_{n'})\log\mu_{nn'}(y_{n},y_{n'})$
are singleton and pairwise entropies. We have used the identity $I(\mu_{nn'})=H(\mu_{n})+H(\mu_{n'})-H(\mu_{nn'})$.
We have implicitly used the \emph{pairwise} marginalization constraints
when using the mutual information identity, so these gradients are
valid only on the local polytope---a fact that is important to remember
when optimizing.

The stationary point of the objective in~\eqref{eq:minimax} is thus
\begin{eqnarray}
0 & = & \sum_{mn}\left(y_{n}^{m}-\mu_{n}^{m}\right)u_{n}^{m\top}-\lambda F\nonumber \\
\Rightarrow F & = & \lambda^{-1}\sum_{mn}\left(y_{n}^{m}-\mu_{n}^{m}\right)u_{v}^{m\top}\label{eq:F}
\end{eqnarray}
Similarly, $G=\lambda^{-1}\sum_{me}\left(y_{e}^{m}-\mu_{e}^{m}\right)v_{e}^{\top}$.
Recalling the definition of the Frobenius norm and rearranging some
summations, we get
\begin{eqnarray}
\frac{\lambda}{2}\left\Vert F\right\Vert _{F}^{2}=\frac{\lambda}{2\lambda^{2}}\left\Vert \sum_{mn}\left(y_{n}^{m}-\mu_{n}^{m}\right)u_{n}^{m\top}\right\Vert _{F}^{2} & = & \frac{1}{2\lambda}\sum_{mn}\sum_{m'n'}\left(y_{n}^{m}-\mu_{n}^{m}\right)^{\top}\left(y_{n'}^{m'}-\mu_{n'}^{m'}\right)u_{n'}^{m'\top}u_{n}^{m}\label{eq:Fnorm}
\end{eqnarray}
On the other hand, the quadratic terms are
\begin{eqnarray}
\frac{1}{\lambda}\sum_{mn}\left(y_{n}^{m}-\mu_{n}^{m}\right)^{\top}\sum_{m'n'}\left(y_{n'}^{m'}-\mu_{n'}^{m'}\right)u_{n'}^{m'\top}u_{n}^{m} & = & \frac{1}{\lambda}\sum_{mn}\sum_{m'n'}\left(y_{n}^{m}-\mu_{n}^{m}\right)^{\top}\left(y_{n'}^{m'}-\mu_{n'}^{m'}\right)u_{n'}^{m'\top}u_{n}^{m}\nonumber \\
 & = & 2\lambda\left\Vert F\right\Vert _{F}^{2}\label{eq:quad}
\end{eqnarray}
e.g. the same Frobenius norm of outer products, by comparison with
(\ref{eq:Fnorm}).

We have eliminated the matrix $F$ (and similarly, $G$), which reveals
that our objective is a quadratic form in the Gram matrices $UU^{\top}$
and $VV^{\top}$, where $U$ is obtained by vertically stacking the
$u_{n}^{m}$ and $V$ obtained by vertically stacking the $v_{e}^{m}$,
so that the entry $\left(UU^{\top}\right)_{nm,n'm'}=u_{n}^{m\top}u_{n'}^{m'}$
and $\left(VV^{\top}\right)_{em,e'm'}=v_{e}^{m'\top}v_{e'}^{m'}$.
Let $Y_{N},T_{N}$ be the matrices obtained whose $(mn,\ell)$'th
entry is given by $u_{n}^{m}(\ell),\mu_{n}^{m}(\ell)$ and $V_{E},T_{E}$
be the matrices whose $(me,\ell\ell')$'th entries are given by $v_{n}^{m}(\ell,\ell'),\mu_{n}^{m}(\ell,\ell')$.
The objective is quadratic in $(Y_{E}-T_{E})$ and $(Y_{N}-T_{N})$,
so we can flip them to simplify some signs later. Write $W=T-Y$.
Then the objective is
\begin{eqnarray}
\min_{T_{N},T_{E}\in T^{{\cal M}}} & = & \frac{1}{2\lambda}\left\langle W_{N}W_{N}^{\top},UU^{\top}\right\rangle +\frac{1}{2\lambda}\left\langle W_{E}W_{E}^{\top},VV^{\top}\right\rangle -H_{\rho}\left(T_{N},T_{E}\right)\nonumber \\
 &  & \frac{1}{2\lambda}\tr\left(W_{N}^{\top}UU^{\top}W_{N}\right)+\frac{1}{2\lambda}\tr\left(W_{E}^{\top}VV^{\top}W_{E}\right)-H_{\rho}\left(T_{N},T_{E}\right)\nonumber \\
 & = & \frac{1}{2\lambda}\left\Vert U^{\top}W_{N}\right\Vert _{F}^{2}+\frac{1}{2\lambda}\left\Vert V^{\top}W_{E}\right\Vert _{F}^{2}-H_{\rho}\left(T_{N},T_{E}\right)\label{eq:obj2}
\end{eqnarray}
with a matricized form of the entropy as
\begin{eqnarray*}
H_{\rho}(T_{N},T_{E}) & = & -\left\langle 1-R_{N},T_{N}\circ\log T_{N}\right\rangle -\left\langle R_{E},T_{E}\circ\log T_{E}\right\rangle 
\end{eqnarray*}
where $\circ$ denotes elementwise multiplication, and $R_{N},R_{E}$
are matrices of reweighting parameters conforming to $T_{N},T_{E}$.
That is, $\left(R_{N}\right)_{nm,:}=\sum_{n'\in\Ne(n)}\rho_{nn'}$
while $\left(R_{E}\right)_{ne,:}=\rho_{nn'}$. From
this form, the gradients are evidently
\begin{eqnarray*}
\frac{\partial H_{\rho}}{\partial T_{N}} & = & -\left(1-R_{N}\right)\circ\left(1+\log T_{N}\right)\\
\frac{\partial H_{\rho}}{\partial T_{N}} & = & -R_{E}\circ\left(1+\log T_{E}\right)
\end{eqnarray*}
So the gradients of our objective are
\begin{eqnarray}
\frac{\partial h}{\partial T_{N}} & = & \lambda^{-1}UU^{\top}\left(T_{N}-Y_{N}\right)+\left(1-R_{N}\right)\circ\left(1+\log T_{N}\right)\label{eq:gtn}\\
\frac{\partial h}{\partial T_{E}} & = & \lambda^{-1}VV^{\top}\left(T_{E}-Y_{E}\right)+R_{E}\circ\left(1+\log T_{E}\right)\label{eq:gte}
\end{eqnarray}

\subsubsection{Linesearch}

The quadratic terms of $h(\mu+\eta d)$, as a function of $\eta$,
are
\begin{eqnarray*}
 &  & \frac{1}{2\lambda}\left(\left\Vert U^{\top}\left(W_{N}+\eta D_{N}\right)\right\Vert _{F}^{2}+\left\Vert V^{\top}\left(W_{E}+\eta D_{E}\right)\right\Vert _{F}^{2}\right)\\
 & = & \frac{1}{2\lambda}\left(\left\Vert U^{\top}W_{N}+\eta U^{\top}D_{N}\right\Vert _{F}^{2}+\left\Vert V^{\top}W_{E}+\eta V^{\top}D_{E}\right\Vert _{F}^{2}\right)\\
 & = & \frac{1}{2\lambda}\left(\left\Vert U^{\top}W_{N}\right\Vert _{F}^{2}+\left\Vert V^{\top}W_{E}\right\Vert _{F}^{2}+\eta\left(\left\langle U^{\top}W_{N},U^{\top}D_{N}\right\rangle +\left\langle V^{\top}W_{E},V^{\top}D_{E}\right\rangle \right)+\eta^{2}\left(\left\Vert U^{\top}D_{N}\right\Vert _{F}^{2}+\left\Vert V^{\top}D_{E}\right\Vert _{F}^{2}\right)\right)
\end{eqnarray*}
The inner products $\left\langle \cdot,\cdot\right\rangle $
are matrix inner products of $C\times L$ and $D\times L^{2}$ matrices.
For the Bethe/TRW entropy, we will just have to treat it as a black
box.

\subsubsection{Block-Coordinate Updates}

For block-coordinate Frank-Wolfe, we can update the gradient and perform
linesearch without computing the full inner products $U^{\top}W_{N}$
and $V^{\top}W_{E}$. Let $\eta$ denote a step size, $D^{m}=(D_{N}^{m},D_{E}^{m})$
denote the step direction for sample $m$, and $U^{m}$, $V^{m}$
be the submatrices containing only those rows for sample $m$. More
precisely, $D^{[m]}=\left(D_{N}^{[m]},D_{E}^{[m]}\right)$ where $D_{N}^{[m]}$
has the same number of rows as $U$ and $D_{E}^{[m]}$ has the same
number of rows as $V$, but only those rows for sample $m$ are nonzero.
Suppose we move to the point $T+\eta D^{[m]}=\left(T_{N}+\eta D_{N}^{[m]},T_{E}+\eta D_{E}^{[m]}\right)$.
Then our new inner products are
\begin{eqnarray*}
U^{\top}\left(T_{N}+\eta D_{N}^{[m]}-Y_{N}\right) & = & U^{\top}W_{N}+\eta U^{m\top}D^{m}\\
V^{\top}\left(\mu_{E}+\eta d^{[m]}-y_{E}\right) & = & V^{\top}W_{E}+\eta V^{m\top}d^{m}
\end{eqnarray*}
Thus we need only add the update terms $\eta U^{m\top}D_{N}^{m}$
and $\eta V^{m\top}D_{E}^{m}$. The algorithm depends on the value
of $T$ only through these inner products.

\section{Experiments for FW-based Inference}
\label{sec:fw-inf}
In Figure~\ref{fig:pm}, we compare Frank-Wolfe (FW) for bipartite perfect matching to the perturb-and-MAP algorithm of~\citetappendix{KeLi2013}, using code obtained from the authors. In (a) and (b) we plot the $l_{\infty}$ distance of the approximate marginals from exact marginals computed with brute force v.s. the number of calls to the maximum-matching solver. We use a bipartite graph with 10 nodes on each side, i.i.d. edge weights distributed $\text{unif}[0,1]$, and inverse temperatures of 10 (a) and 0.25 (b). In (c),  we run each algorithm for a very large number of MAP calls for a range of temperatures in order to identify the affect of temperature on the algorithms' errors, and results are aggregated over  10 random $n=10$ graphs. Overall, we find that the Bethe approximation provided by FW  is substantially more accurate than perturb-and-MAP and that FW converges more quickly. However, (c) suggests that changes in temperature affect the algorithms' approximation accuracies differently. 

\begin{figure}[h]
       \centering
                \includegraphics[width=0.3\columnwidth]{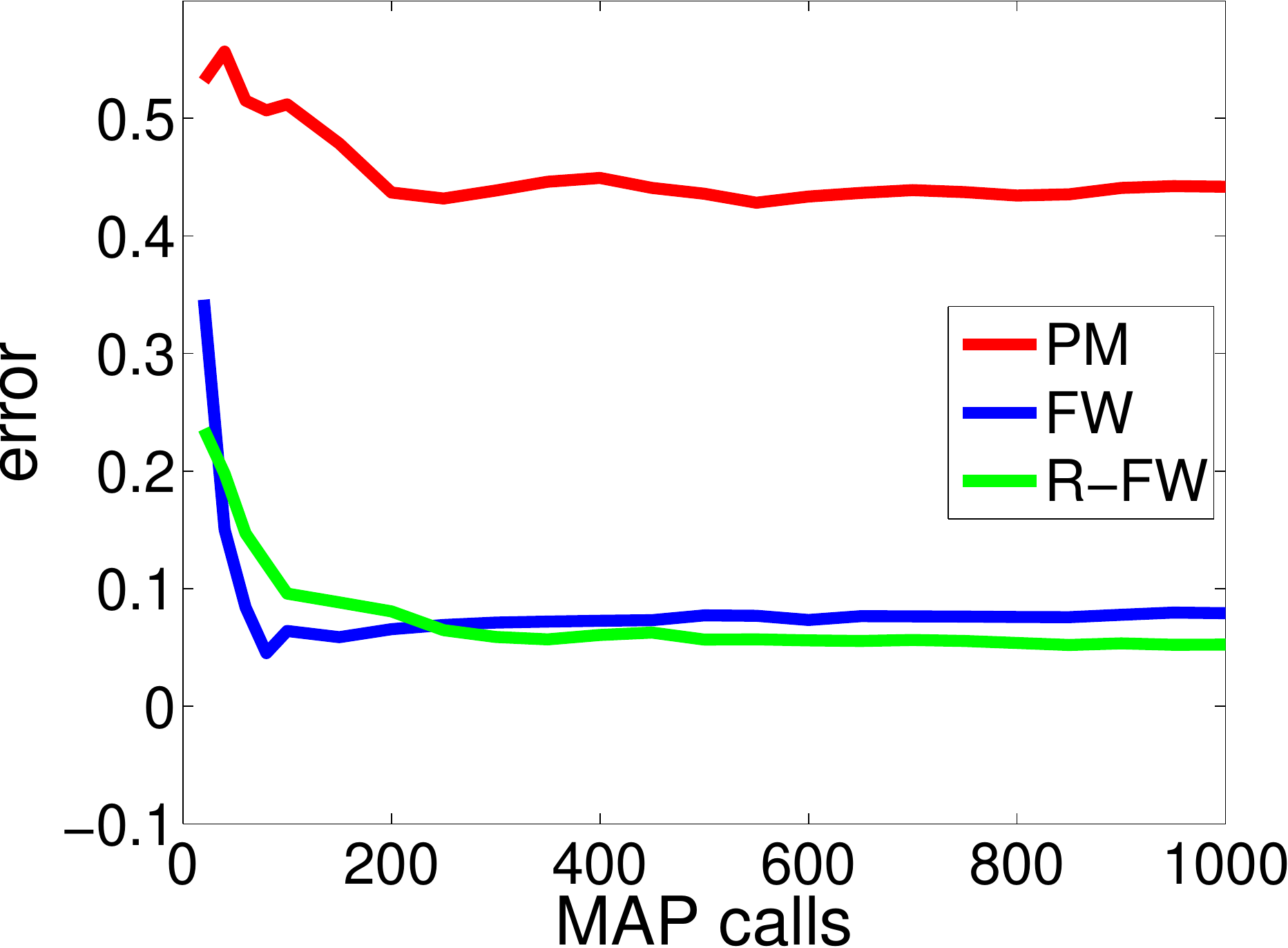}
~
                \includegraphics[width=0.3\columnwidth]{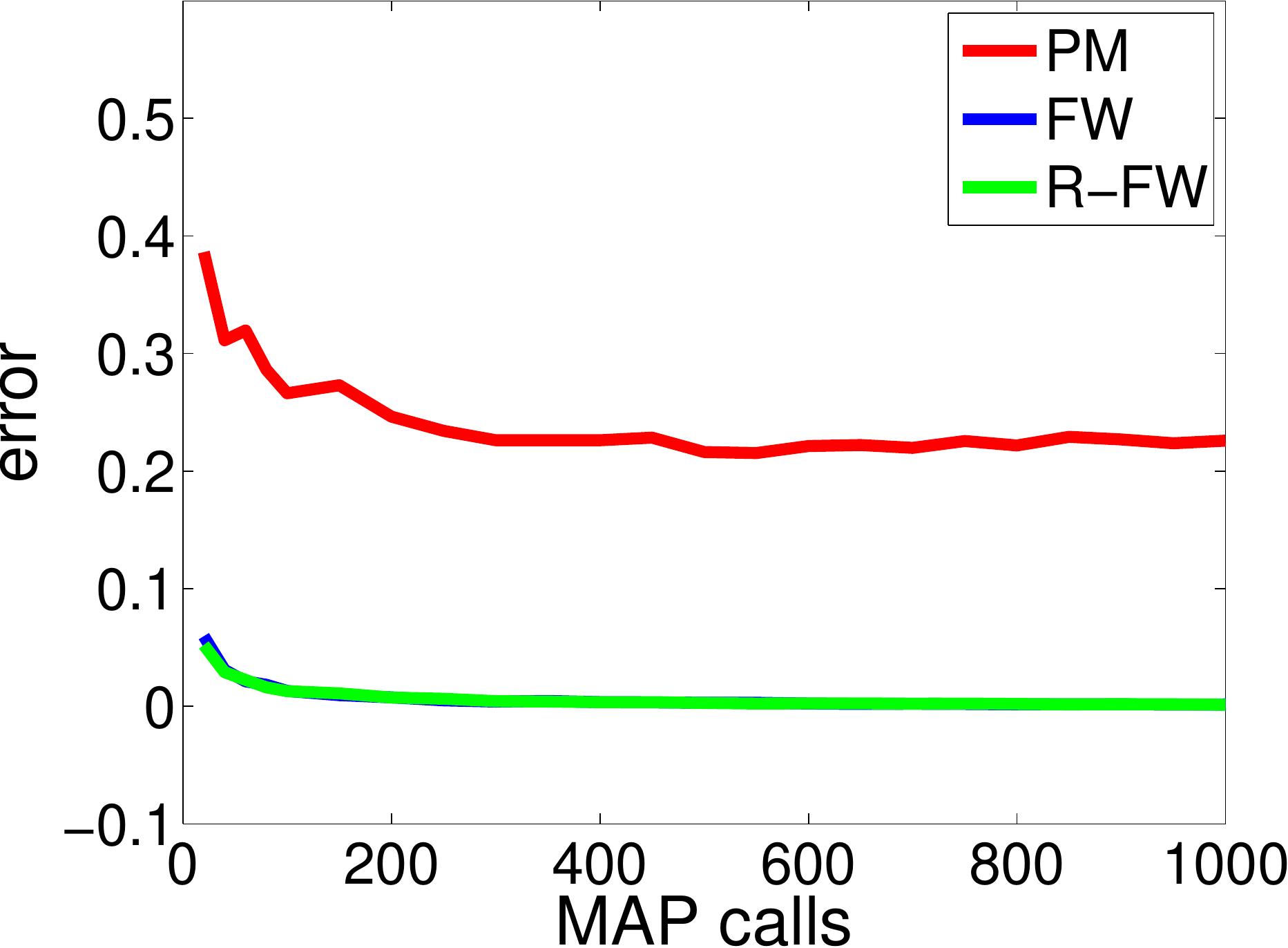}
                \includegraphics[width=0.3\columnwidth]{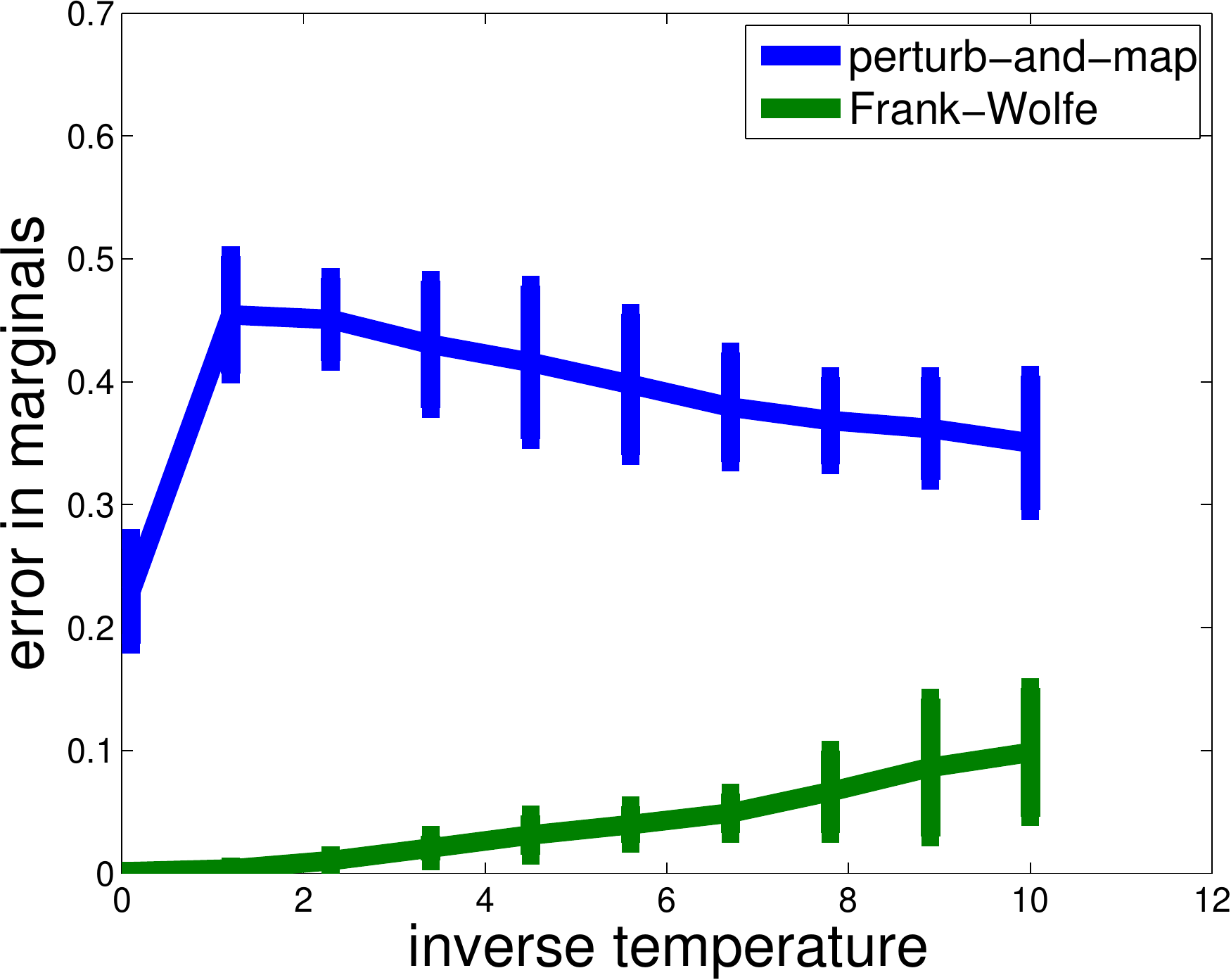}
\caption{Frank-Wolfe v.s. Perturb-and-MAP}\
\label{fig:pm}
\end{figure}

Our proposed FW  algorithm for bipartite perfect matching and the belief propagation algorithm of~\citetappendix{huang2009approximating} minimize the same objective over the same polytope, so we focus on the speed-accuracy trade-offs of the algorithms. In Table~\ref{tab:bpm}, 'rand-$n$` refers to random complete bipartite graphs with n nodes on each side and i.i.d. edge weights from $\text{unif}[0,1]$. The `lda-20' experiment aligns topics from different runs of a Gibbs sampler for an LDA topic model with 20 topics.  All results are averages over 10 graphs. We run the algorithms with a range of termination tolerances in order to obtain various speed-accuracy points. Then, for a range of  $l_{\infty}$ distances to the true Bethe marginals, we compute the time necessary to achieve the specified error. The table presents the ratio of computation time for BP to FW (ratio $>$ 1 means FW is faster). As expected from an algorithm that is no faster than $O(\frac{1}{t})$, we find that it gives good accuracy quickly, but is slow to converge to within very tight error tolerances. Such a speed-accuracy trade off affects other first order methods such as SGD, but can still be advantageous in many cases, including large-scale applications or when optimizing beyond the statistical error of the problem is pointless.

\begin{table}
\centering
\begin{tabular}{ |c | c | c | c |c| }
\hline
$l_\infty$ error& .05  & .01 & 0.005 & 0.001\\
\hline
rand-20 & 3.37 & 1.56 & 1.12 & 0 .25\\
\hline
rand-50 & 18.0 & 11.76 & 5.18  & 0.91 \\
\hline
rand-75 & 23.87  & 23.87 & 9.8  & 1.5\\
\hline
rand-100 & 19.24 & 19.24 & 10.6 & 1.75\\
\hline
lda-20 & 1.6 & .61 & .34 & .10\\
\hline
\end{tabular}
 \caption{
Frank-Wolfe speedup over BP for various error tolerances.}
\label{tab:bpm}
\end{table}

\bibliographystyleappendix{icml2015}
\bibliographyappendix{biblio}

\end{document}